\documentclass[11pt]{article}

\usepackage[preprint]{bags/acl}

\usepackage{times}
\usepackage{latexsym}

\usepackage[T1]{fontenc}
\usepackage[utf8]{inputenc}

\usepackage{microtype}

\usepackage{inconsolata}

\usepackage{graphicx}
\usepackage{booktabs}
\usepackage[table]{xcolor}
\usepackage{graphicx}
\usepackage{makecell}
\usepackage{multirow}
\usepackage{amsmath,amssymb}
\usepackage{algorithm}
\usepackage{algpseudocode}
\usepackage{float}
\usepackage{amsmath,amssymb,amsthm}
\newtheorem{lemma}{Lemma}
\newtheorem{theorem}{Theorem}
\usepackage{booktabs}
\usepackage{tabularx}
\usepackage{array}
\newcommand{\best}[1]{\textbf{#1}}
\newcommand{\second}[1]{\underline{#1}}

\definecolor{lightblue}{RGB}{235,244,251}

\usepackage{enumitem}
\setlist[itemize,1]{leftmargin=10pt, topsep=0pt, itemsep=0pt, parsep=0pt}

\newcommand{\responsebox}[2]{%
  \begin{minipage}[t]{0.485\linewidth}
    \vspace{0pt}
    \begingroup
    \setlength{\fboxsep}{6pt}
    \fcolorbox{black!35}{black!2}{%
      \begin{minipage}[t]
        {\dimexpr\linewidth-2\fboxsep-2\fboxrule\relax}
        \footnotesize\raggedright
        \setlength{\parindent}{0pt}
        \setlength{\parskip}{0pt}
        \textbf{#1}\par\medskip
        #2
      \end{minipage}%
    }
    \endgroup
  \end{minipage}%
}

\usepackage{enumitem}
\usepackage[most]{tcolorbox}

\usepackage{booktabs}
\usepackage{adjustbox}

\DeclareMathSizes{11}{9}{6}{5}
\DeclareMathSizes{10.95}{9}{6}{5}
\DeclareMathSizes{10}{9}{6}{5}
\DeclareMathSizes{9}{9}{6}{5}

\title{Learning to Reason with Selective Internal Feedback}

\author{
  \textbf{Xun Lu\textsuperscript{1,2}}\thanks{These authors contributed equally.},
  \textbf{Shumeng Yang\textsuperscript{1,2}}\footnotemark[1],
  \textbf{Xinrui Wang\textsuperscript{1,2}},
  \\
  \textbf{Songjun Tu\textsuperscript{1,2,3}},
  \textbf{Linjing Li\textsuperscript{1}}\thanks{Corresponding author.},
  \textbf{Daniel Dajun Zeng\textsuperscript{1}}
  \\[0.5em]
  \textsuperscript{1}State Key Laboratory of Multimodal Artificial Intelligence Systems,
  \\
  Institute of Automation, Chinese Academy of Sciences, Beijing, China
  \\
  \textsuperscript{2}School of Artificial Intelligence,
  \\
  University of Chinese Academy of Sciences, Beijing, China
  \\
  \textsuperscript{3}Pengcheng Laboratory
  \\[0.3em]
  \texttt{\{luxun2025,linjing.li\}@ia.ac.cn}
}

\begin{document}
\maketitle
\begin{abstract}
Reinforcement learning from internal feedback can improve language models without ground-truth answers or external verifiers. However, confidence and correctness can diverge during training, and continued optimization of average self-certainty can increase confidence while degrading reasoning accuracy. We introduce \textsc{SIF}, a reinforcement learning method that combines comparative reward construction with selective policy updates. \textsc{SIF} compares ranked token-level self-certainty profiles across multiple windows and peers, assigning positive rewards only when both levels of voting provide sufficient support. Sample and group level filtering determines which responses contribute to policy optimization. Experiments on three backbones across 12 mathematical and scientific reasoning benchmarks show that \textsc{SIF}'s final checkpoints achieve higher average accuracy than the best-validation checkpoints of Intuitor and Vigor, with a gain of 4.71 percentage points over Intuitor on Qwen2.5-Math-1.5B. Further analyses show better retention of accuracy during continued optimization, while test-time training experiments demonstrate sustained improvements with as few as four rollouts per prompt.
\end{abstract}

\section{Introduction}
\label{sec:introduction}

Reinforcement learning (RL) has substantially improved the reasoning capabilities of large language models (LLMs)~\cite{shao2024deepseekmathpushinglimitsmathematical,Guo_2025}. However, extending these gains remains constrained by the availability of suitable rewards. Verifiable rewards often depend on reference answers, task-specific verifiers, or execution environments~\cite{su2025crossingrewardbridgeexpanding}. Reinforcement learning from internal feedback (RLIF) reduces this dependence by deriving rewards from the model's own outputs and predictive distributions~\cite{zhang2025rightquestionhalfanswer}. Test-Time Reinforcement Learning (TTRL), for example, uses agreement among sampled answers, requiring enough responses with extractable final answers to estimate meaningful consensus~\cite{zuo2025ttrltesttimereinforcementlearning}. This motivates signals obtained directly from token distributions, such as self-certainty, which has shown promise for response selection~\cite{kang2025scalablebestofnselectionlarge}. Intuitor takes further by optimizing mean token-level self-certainty as its sole reward~\cite{zhao2026learningreasonexternalrewards}. Using a selection statistic as training reward, however, implicitly assumes its usefulness persists as the policy evolves. \emph{Does optimizing mean self-certainty lead to sustained reasoning improvements?}

\begin{figure}
    \centering
\includegraphics[width=\linewidth]{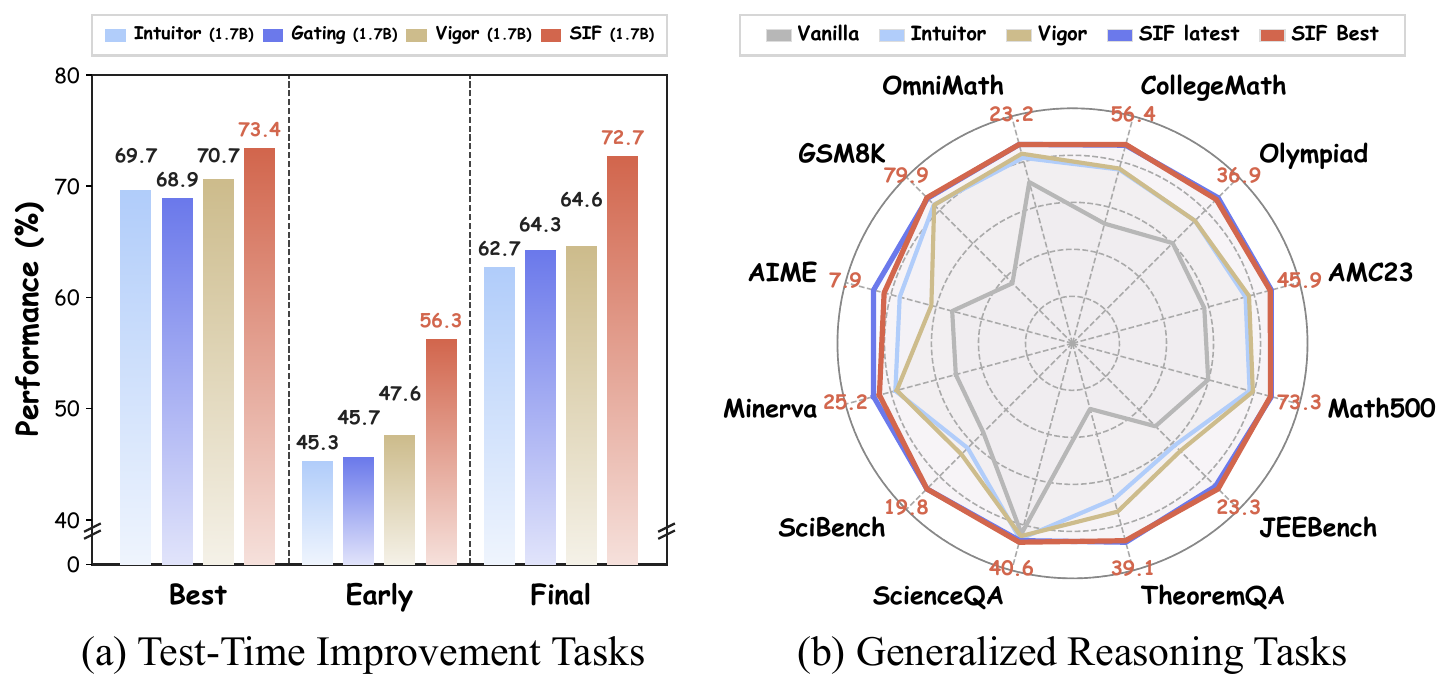}
\vspace{-19pt}
    \caption{Performance overview of SIF algorithm.}
    \vspace{-6pt}
\label{fig:Performance_overview}
\end{figure}

\begin{figure*}
    \centering
\includegraphics[width=\linewidth]{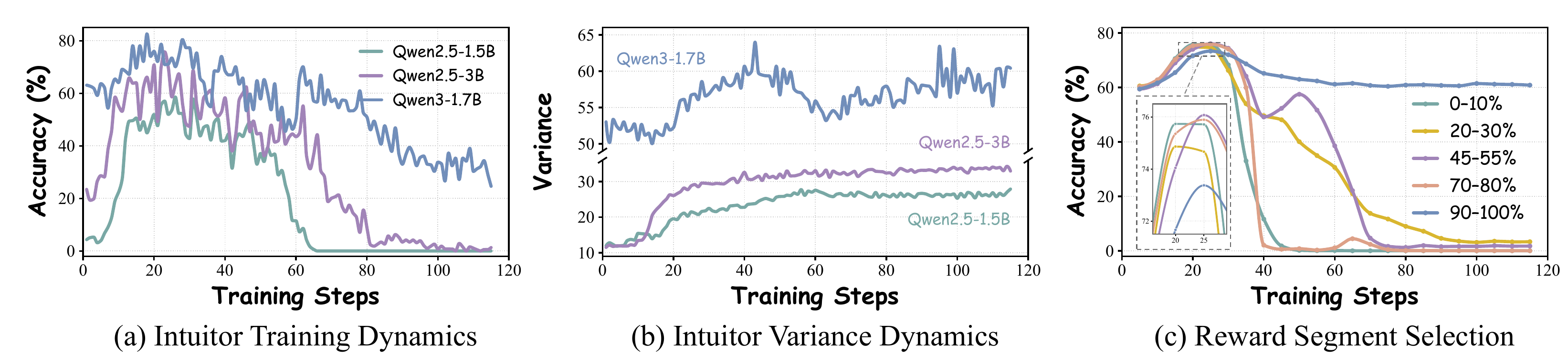}
\vspace{-17pt}
  \caption{
Training dynamics and diagnostic experiments for Intuitor.
(a) Training set accuracy across three backbones.
(b) Variance of token-level self-certainty during training.
(c) Validation accuracy of training variants using mean self-certainty within different rank intervals as rewards, with token scores sorted in descending order.
}
    \vspace{-1pt}
\label{fig:length_comparison}
\end{figure*}

This assumption does not always hold during training. High certainty in some segments can mask uncertainty elsewhere in sequence mean, and incorrect response may still receive a higher reward than a correct one. Policy updates can then reinforce these misleading preferences. Prior studies have reported early accuracy gains followed by degradation under continued optimization of internal rewards~\cite{zhang2025freelunchrethinkinginternal,ghimire2026prismunifiedframeworkposttraining,joarder2026betteronecollapsefreemultireward}. Self-certainty optimization can also lead to entropy collapse~\cite{jang-etal-2026-bad}. Figure~\ref{fig:training_dynamics} shows that Intuitor's training-batch accuracy initially improves but later declines substantially, accompanied by decreasing policy entropy. These observations raise a key question: \emph{How can we construct and selectively use self-certainty feedback to sustain performance gains during training?}

To address this question, we propose \textbf{S}elective \textbf{I}nternal \textbf{F}eedback (SIF), which builds comparative feedback from self-certainty profiles and selectively uses it for policy optimization. SIF assigns positive rewards only when comparisons across multiple ranked certainty levels and peer responses provide sufficient support. Response-level eligibility determines which responses participate in optimization, while groups with identical rewards are excluded from policy updates. All rewards are derived from model distributions, without final-answer extraction or answer-level pseudo-labels.

\begin{figure*}[t]
    \centering  \includegraphics[width=0.98\textwidth]{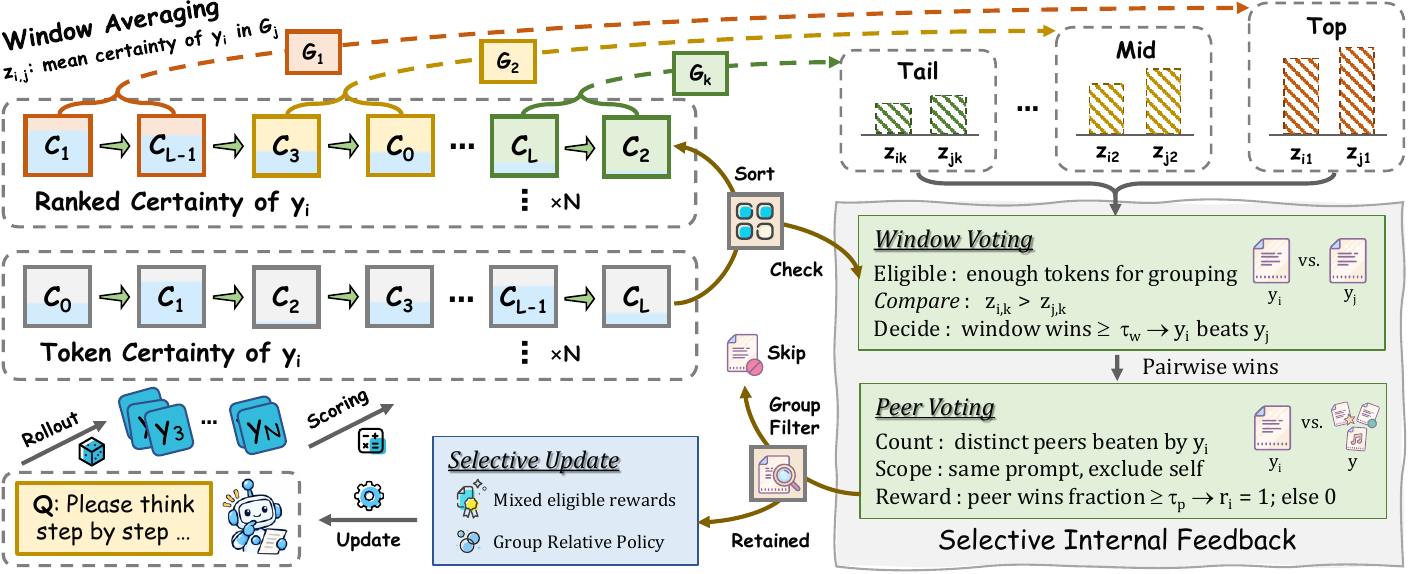}
   \caption{
Overview of SIF.
Token-level self-certainty scores are ranked and summarized across multiple windows. Window-level and peer-level voting jointly construct binary rewards, while response eligibility and group-level filtering determine participation in group-relative policy updates.
}
    \label{fig:overview}
     \vspace{-4pt}
\end{figure*}

We evaluate SIF through training on MATH with three backbones and evaluation across 12 mathematical and scientific reasoning benchmarks, as well as through test-time adaptation using unlabeled inputs. Following MATH training, SIF's final checkpoints achieve higher average accuracy than the best-validation checkpoints of Intuitor and Vigor~\cite{wen2026verifierfreerlllmsintrinsic} on all three backbones. Training diagnostics show reduced late-stage deterioration, while test-time adaptation experiments show improvements over the initial models persist to the final checkpoint. Figure~\ref{fig:Performance_overview} highlights SIF's performance gains over existing internal-feedback methods. The main contributions are as follows:

\begin{tcolorbox}[
    breakable,
    enhanced,
    colback=black!2,
    colframe=black!15,
    boxrule=0.1pt,
    arc=1mm,
    sharp corners=southwest,
    sharp corners=southeast,
    left=0.0mm,
    right=1.0mm,
    top=1.0mm,
    bottom=1.0mm
]
\begin{enumerate}[
    leftmargin=2em,
    itemsep=2pt,
    topsep=2pt,
    parsep=0pt
]
    \item We propose \textbf{two-level self-certainty voting}, which constructs binary internal rewards through comparisons across \emph{ranked certainty levels} and \emph{peer responses}.

\item We develop \textbf{selective policy updates} within SIF, integrating voting-based rewards with \textbf{response and group level filtering} under GRPO to jointly control reward construction and participation in policy optimization.

\item We evaluate SIF under \textbf{MATH training and test-time adaptation}. Across 12 benchmarks, its final checkpoints after MATH training achieve \textbf{higher average accuracy than the best-validation checkpoints of Intuitor and Vigor on all three backbones}, with gains of \textbf{4.71\%} and \textbf{4.31\%}, respectively, on Qwen2.5-Math-1.5B.

\end{enumerate}
\end{tcolorbox}

\section{Observation}
\label{sec:observation}

To better understand the performance degradation discussed in the introduction, we examine the dynamics of continued self-certainty optimization and the factors that affect performance retention. 

\noindent
\textbf{Notation and self-certainty.}
Let $\mathcal D$ be prompt dataset and $\pi_\theta$ trainable policy. For each prompt $x\in\mathcal D$, we sample a group of responses $y_i\sim\pi_\theta(\cdot\mid x)$, $i=1,\ldots,N$. Let $L_i$ denote valid token length of response $y_i$, and let $p_\theta(\cdot\mid x,y_{i,<t})$ denote predictive distribution at its $t$-th token position. We quantify self-certainty using KL divergence from uniform vocabulary distribution to predictive distribution:
\begin{equation}
\begin{aligned}
c_{i,t}
&=
D_{\mathrm{KL}}\!\left(
U \,\middle\|\,
p_\theta(\cdot\mid x,y_{i,<t})
\right)\\
&=
-\frac{1}{|\mathcal V|}
\sum_{v\in\mathcal V}
\log\!\left(
|\mathcal V|\,p_\theta(v\mid x,y_{i,<t})
\right),
\end{aligned}
\label{eq:obs-token-certainty}
\end{equation}
where $\mathcal V$ is the vocabulary and $U$ is the uniform distribution over $\mathcal V$. The mean self-certainty and variance within a response are
\begin{equation}
\begin{aligned}
\bar c_i
&=
\frac{1}{L_i}
\sum_{t=1}^{L_i}c_{i,t},\\
v_i
&=
\frac{1}{L_i}
\sum_{t=1}^{L_i}
(c_{i,t}-\bar c_i)^2.
\end{aligned}
\label{eq:sif-certainty}
\end{equation}
This variance is computed over valid tokens within each complete response, not across different responses to the same prompt.

\noindent
\textbf{(1) Early gains give way to deterioration.}
Intuitor uses mean token-level
self-certainty as its sole training reward.
Figure~\ref{fig:length_comparison}(a) shows that all three models achieve early accuracy gains, followed by varying degrees of deterioration as training continues.
Table~\ref{tab:diagnostic} further illustrates the divergence between
the internal reward and correctness: logged self-certainty increases
while MATH validation accuracy falls to zero in the reported runs
on Qwen3-1.7B and Llama-3.2-3B-Instruct.
These observations demonstrate that increasing self-certainty can coexist
with sharply declining accuracy, limiting its reliability as an objective
for sustained policy improvement. One possible explanation is that averaging over the entire response obscures differences within the certainty distribution. A preference based only on this mean may therefore lack consistent support across different parts of the response. To investigate this possibility, we examine the variance of self-certainty within responses.

\noindent
\textbf{(2) Certainty becomes more dispersed.}
Figure~\ref{fig:length_comparison}(b) shows that the variance within responses increases relative to the beginning of training across all three models, subsequently remaining elevated or fluctuating. This observation suggests that a single mean does not adequately characterize the certainty information within a complete response. Greater variance indicates a more dispersed distribution, so an increase in the overall mean therefore does not imply that all certainty levels strengthen together. These observations motivate a further question: do different certainty levels provide feedback with different training value?

\noindent
\textbf{(3) Certainty levels play distinct roles.}
For each response, we sort token-level self-certainty scores in descending
order and replace the mean over the entire response with the mean within a selected
ranked band. Each band defines a separate training variant, evaluated by validation accuracy in Figure~\ref{fig:length_comparison}(c). The figure~\ref{fig:length_comparison}(c) compares the
$0$--$10\%$, $20$--$30\%$, $45$--$55\%$, $70$--$80\%$, and $90$--$100\%$
bands, spanning the highest to the lowest certainty levels. 
These bands refer to positions in the certainty ranking, not contiguous spans of the original response.

The variants exhibit different training characteristics. The highest-certainty band yields rapid early gains but deteriorates substantially later; the intermediate band supports a high peak accuracy; and the lowest-certainty band retains more of its early improvement, ending at approximately $60\%$ validation
accuracy. These differences suggest that different certainty levels serve distinct roles during training. A single response mean obscures these distinctions, while selecting only one band leaves information from other levels unused.

\noindent
\textbf{Implication for SIF.}
These observations motivate comparative feedback
that requires support across multiple certainty
levels and multiple responses to the same prompt.
The following section introduces
SIF,
which implements this through voting
over self-certainty profiles and selective
policy updates.

\section{Method}
\label{sec:sif-method}
SIF constructs selective training feedback through pairwise comparisons of the model's self-certainty profiles. Its central operation tests whether a relative preference persists across multiple portions of a response's confidence ranking and across multiple peers. The procedure comprises four steps: sampling responses and computing self-certainty, constructing eligible ranked profiles, assigning two-level voting rewards, and selectively optimizing the policy. The policy reward is derived entirely from model distributions, without reference answers or an external verifier. Algorithm~\ref{alg:sif} provides the complete training procedure.

We follow the notation introduced in Section~\ref{sec:observation}. Let $\pi_{\mathrm{old}}$ denote the policy snapshot before the current rollout batch and $\pi_{\mathrm{ref}}$ the fixed reference policy. For each prompt $x$ in a batch $\mathcal B$ sampled from $\mathcal D$, we generate $N$ responses
$y_i\sim\pi_{\mathrm{old}}(\cdot\mid x)$, $i=1,\ldots,N$.
These responses form the comparison group from which SIF constructs its internal feedback.

\subsection{Response Eligibility and Ranked Profile}
\noindent \textbf {Length-based eligibility.}
A ranked profile requires enough valid tokens to support the configured windows. We use $K$ windows of width $w$, with $s$ specifying the minimum stride between window starts. The resulting response-length threshold, eligibility indicator, and eligible index set are
\begin{equation}
 \begin{aligned}
 L_{\min}&=w+(K-1)s,\\
 e_i&=\mathbf 1[L_i\geq L_{\min}],\\
 \mathcal E_x&=\{i:e_i=1\}.
 \end{aligned}
 \label{eq:sif-eligibility}
\end{equation}
The indicator $e_i$ determines whether response $i$ can participate in profile comparisons. Throughout, $\mathbf 1[\cdot]$ equals one when its condition holds and zero otherwise. The set $\mathcal E_x$ collects the eligible responses associated with prompt $x$. For $K>1$, the threshold provides sufficient space for $K$ windows whose starts are separated by at least $s$ positions.

\noindent \textbf {Constructing ranked profiles.}
For each eligible response, we sort its valid token-level self-certainties in descending order:
\begin{equation}
 c_{i,1}\geq c_{i,2}\geq\cdots
 \geq c_{i,L_i}.
 \label{eq:sif-sorting}
\end{equation}
Sorting makes the profile describe the distribution of self-certainty within a response. 

We place windows across this ranking to summarize different portions of the self-certainty distribution. The $k$-th window starts at
\begin{equation}
 a_{i,k}=
 \begin{cases}
 1, & K=1,\\
 1+\left\lfloor\dfrac{(k-1)(L_i-w)}{K-1}\right\rfloor,
    & K>1,
 \end{cases}
 \label{eq:sif-window-start}
\end{equation}
for $k=1,\ldots,K$. The floor operation maps each start to an integer position. For $K>1$, the starts span the ranking from its highest-certainty region to its lowest-certainty region. For $K=1$, the single window begins at the highest-ranked position.

Each window is summarized by its mean self-certainty, yielding the profile vector
\begin{equation}
 \begin{aligned}
 z_{i,k}&=\frac{1}{w}
   \sum_{u=a_{i,k}}^{a_{i,k}+w-1} c_{i,u},\\
 \mathbf z_i&=(z_{i,1},\ldots,z_{i,K}).
 \end{aligned}
 \label{eq:sif-profile}
\end{equation}
The scalar $z_{i,k}$ summarizes the $k$-th ranked window, and $\mathbf z_i$ contains all $K$ window summaries for response $i$. For $K>1$, the first and last windows contain the highest and lowest $w$ certainties, respectively. This supplies multiple comparison criteria while retaining the same profile dimension across responses of different eligible lengths.

\begin{figure}
    \centering
\includegraphics[width=\linewidth]{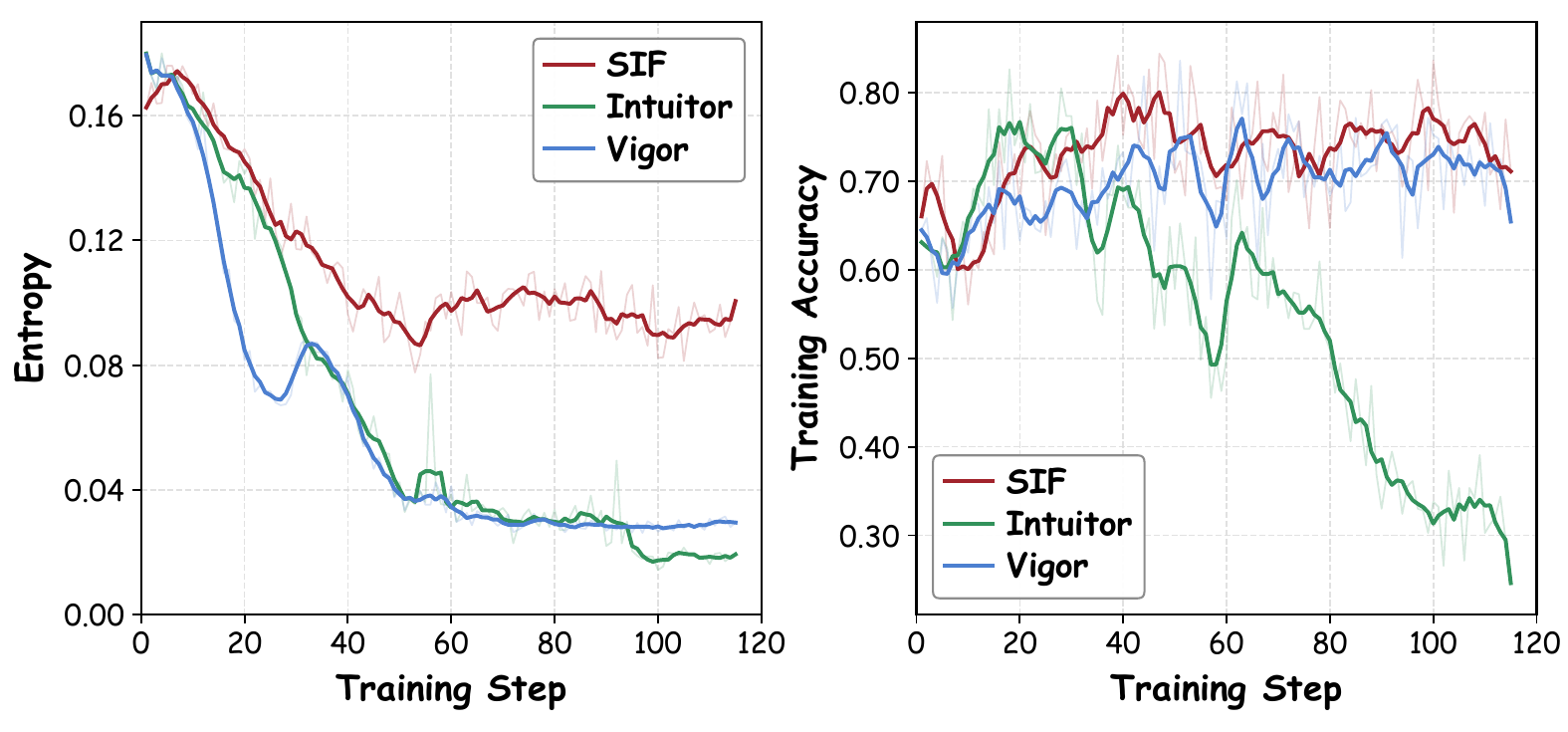}
\vspace{-19pt}
   \caption{Training dynamics of SIF, Intuitor, and Vigor on Qwen3-1.7B: policy entropy (left) and training-batch accuracy (right).}
    \vspace{-6pt}
\label{fig:training_dynamics}
\end{figure}

\begin{table*}[t]
\centering
\caption{
Accuracy (\%) on 12 reasoning benchmarks across three backbones.
All methods are trained and evaluated under identical settings within
each backbone. Baseline results use checkpoint achieving the highest
validation accuracy during training, while SIF reports final
(last) and best-validation (best) checkpoints.
Avg.\ denotes the unweighted mean across benchmarks.
The best and second-best distinct scores within each backbone,
excluding GRPO, are \textbf{bolded} and \underline{underlined}, with ties highlighted equally.
GRPO uses ground-truth (GT) answers.
}
\label{tab:main_results}
\setlength{\tabcolsep}{3.2pt}
\renewcommand{\arraystretch}{1.12}
\small
\resizebox{\textwidth}{!}{
\begin{tabular}{l*{13}{c}}
\toprule
\multirow{2}{*}{\textbf{Method}}
& \multicolumn{8}{c}{\textbf{Mathematical Reasoning}}
& \multicolumn{4}{c}{\textbf{Scientific Reasoning}}
& \multirow{2}{*}{\textbf{Avg.}} \\
\cmidrule(lr){2-9}
\cmidrule(lr){10-13}
& AIME
& AMC23
& Math500
& Minerva
& Olympiad
& GSM8K
& Omni
& College
& JEEBench
& Theorem
& Science
& SciBench
& \\
\midrule

% Llama-3.2-3B-Instruct
\textbf{Llama-3.2-3B-Instruct}
& \best{3.12} & 17.39 & 41.20 & 15.26
& 13.67 & 72.86 & 12.15 & 33.06
& 13.77 & 23.39 & 42.95 & 7.31
& 24.68 \\

\quad + TTRL
& 2.08 & 19.84 & \best{47.55} & \best{17.78}
& \best{16.34} & \best{78.91} & \second{13.72} & 33.91
& \second{14.60} & \best{26.41} & \second{44.01} & \best{9.64}
& \best{27.07} \\

\quad + SCOPE
& 1.67 & \second{20.38} & 44.33 & 16.82
& 14.61 & 74.85 & 13.02 & \second{34.69}
& 13.30 & 23.91 & 43.18 & 8.19
& 25.75 \\

\quad + Intuitor
& \second{2.71} & 17.93 & 41.48 & 16.18
& 14.73 & 72.34 & 12.34 & 33.92
& 13.76 & 22.57 & 43.13 & 7.83
& 24.91 \\

\quad + Vigor
& 1.25 & 11.14 & 39.30 & 10.06
& 12.22 & 67.67 & 9.74 & 26.32
& 9.94 & 20.05 & \best{44.38} & 7.16
& 21.60 \\

\rowcolor{lightblue}
\quad + SIF (last)
& \second{2.71} & 19.84 & 45.00 & 16.13
& 15.37 & \second{78.23} & 13.41 & 33.28
& 14.15 & 23.26 & 43.89 & 8.81
& 26.17 \\

\rowcolor{lightblue}
\quad + SIF (best)
& 1.67 & \best{21.74} & \second{46.73} & \second{17.65}
& \second{15.91} & 77.72 & \best{14.10} & \best{35.10}
& \best{14.72} & \second{24.41} & 43.99 & \second{8.92}
& \second{26.89} \\

\quad + GRPO (with GT)
& 6.05 & 19.84 & 47.02 & 17.92
& 15.97 & 79.23 & 14.51 & 35.59
& 14.84 & 26.49 & 43.91 & 9.28
& 27.55 \\

\midrule

% Qwen2.5-Math-1.5B
\textbf{Qwen2.5-Math-1.5B}
& 4.79 & 30.43 & 50.00 & 14.80
& 25.40 & 32.97 & 18.78 & 33.92
& 13.29 & 12.90 & 38.83 & 12.14
& 24.02 \\

\quad + TTRL
& \second{7.50} & 42.39 & 66.48 & 22.75
& 31.64 & 77.61 & \second{22.55} & 50.83
& 17.79 & 33.90 & 38.03 & 15.25
& 35.56 \\

\quad + SCOPE
& 5.83 & 41.58 & 65.35 & 22.79
& 31.45 & 77.24 & 22.12 & 50.57
& 19.22 & 35.24 & 38.35 & \second{15.47}
& 35.43 \\

\quad + Intuitor
& 6.88 & 39.95 & 65.25 & 22.47
& 31.06 & 76.10 & 21.61 & 49.26
& 16.36 & 30.59 & 39.92 & 14.27
& 34.48 \\

\quad + Vigor
& 5.62 & 40.76 & 66.45 & 22.29
& 31.05 & 75.92 & 22.09 & 49.53
& 17.26 & 33.10 & 39.32 & 15.12
& 34.88 \\

\rowcolor{lightblue}
\quad + SIF (last)
& \best{7.92} & \best{45.92} & \best{73.28} & \best{25.23}
& \best{36.91} & \second{79.60} & \best{23.17} & \second{56.14}
& \second{22.91} & \best{39.11} & \second{40.24} & \best{19.83}
& \best{39.19} \\

\rowcolor{lightblue}
\quad + SIF (best)
& \second{7.50} & \second{45.65} & \second{72.97} & \second{24.49}
& \second{36.46} & \best{79.92} & \best{23.17} & \best{56.36}
& \best{23.33} & \second{38.81} & \best{40.57} & \best{19.83}
& \second{39.09} \\

\quad + GRPO (with GT)
& 6.88 & 44.57 & 67.75 & 22.79
& 32.33 & 77.71 & 22.68 & 51.21
& 19.06 & 34.67 & 38.46 & 14.94
& 36.09 \\

\midrule

% Qwen3-1.7B
\textbf{Qwen3-1.7B}
& 10.00 & 33.97 & 68.47 & 32.67
& 32.86 & 88.09 & 18.50 & 51.16
& 26.89 & 40.86 & 42.57 & 24.11
& 39.18 \\

\quad + TTRL
& 12.08 & 45.65 & 76.90 & \second{35.25}
& 40.78 & \second{88.13} & 22.26 & \second{55.53}
& 31.37 & 46.17 & 42.67 & \second{27.84}
& 43.72 \\

\quad + SCOPE
& 12.08 & 46.74 & 77.08 & 34.83
& 41.39 & \best{88.37} & 22.26 & \best{55.74}
& 31.60 & 47.19 & \second{42.92} & 27.51
& 43.98 \\

\quad + Intuitor
& 15.21 & 47.83 & 75.88 & 30.19
& 44.34 & 82.93 & \second{26.78} & 50.67
& 36.04 & 47.82 & 41.57 & 22.29
& 43.46 \\

\quad + Vigor
& 10.62 & 45.11 & 74.52 & 30.93
& 41.65 & 82.12 & 24.90 & 53.49
& 33.16 & 43.66 & \best{43.64} & 19.03
& 41.90 \\

\rowcolor{lightblue}
\quad + SIF (last)
& \second{17.29} & \best{50.00} & \second{78.85} & 32.81
& \second{46.74} & 83.58 & \best{27.21} & 52.75
& \second{37.28} & \second{49.48} & 41.73 & 25.71
& \second{45.29} \\

\rowcolor{lightblue}
\quad + SIF (best)
& \best{18.54} & \second{49.73} & \best{79.28} & \best{35.29}
& \best{48.79} & 85.52 & 26.69 & 54.60
& \best{40.16} & \best{51.09} & 41.63 & \best{28.02}
& \best{46.61} \\

\quad + GRPO (with GT)
& 14.59 & 44.29 & 76.48 & 35.71
& 40.49 & 88.19 & 22.02 & 55.35
& 31.95 & 46.62 & 42.70 & 27.33
& 43.81 \\

\bottomrule
\end{tabular}
}
\end{table*}

\subsection{Two-Level Voting Reward}
\noindent \textbf {Window-level pairwise comparisons.}
SIF compares responses only within the same prompt group. For distinct eligible responses $i$ and $j$, it counts windows in which $i$ has higher mean self-certainty:
\begin{equation}
 v_{ij}=\sum_{k=1}^{K}\mathbf 1[z_{i,k}>z_{j,k}].
 \label{eq:sif-window-votes}
\end{equation}
The count $v_{ij}$ measures the support for preferring response $i$ over response $j$ across their ranked profiles. With window-win threshold $\tau_w$, we define the pairwise decision as
\begin{equation}
 b_{ij}=
 \begin{cases}
 \mathbf 1[v_{ij}\geq\tau_w],
   & i,j\in\mathcal E_x,\ i\neq j,\\
 0, & \text{otherwise}.
 \end{cases}
 \label{eq:sif-pairwise-wins}
\end{equation}
Threshold $\tau_w$ specifies how many window comparisons support pairwise win. Tied window means contribute to no vote, and self comparisons are excluded. $b_{ij}$ records whether profile of response $i$ receives sufficient support against response $j$.

\noindent \textbf {Peer-level reward assignment.}
A pairwise win provides support against one peer. We aggregate these decisions across the other eligible responses to obtain the peer-win count and binary reward:
\begin{equation}
\begin{aligned}
q_i &= \sum_{j\in\mathcal E_x\setminus\{i\}} b_{ij},\\
r_i &= e_i\,\mathbf{1}\!\left[
q_i \geq \tau_p\bigl(|\mathcal E_x|-1\bigr)
\right].
\end{aligned}
\end{equation}
Here, $q_i$ counts the other eligible peers defeated by
response $i$, and $\tau_p$ specifies the minimum peer-win
fraction required for a positive reward.
We set $\tau_p=3/4$ by default.
The two thresholds operate at distinct levels:
$\tau_w$ governs each pairwise comparison across ranked windows,
while $\tau_p$ governs the final response-level reward.
The reward reflects relative profile support within the
sampled group, rather than verified correctness.

\subsection{Selective Policy Optimization}
\noindent \textbf {Constructing group-relative advantages.}
We convert the binary response rewards into GRPO-style advantages. $q$, $o_i$, and $G$ denote the prompt, sampled responses, and rollout group size. The response reward and token-level advantage are
\begin{equation}
\begin{split}
r_i &= e_i\,\mathbf{1}\!\left[
q_i \geq \tau_p(|\mathcal E_x|-1)
\right], \\
\hat{A}_{i,t}
&= \frac{
r_i-\operatorname{mean}(\{r_j:j\in\mathcal E_x\})
}{
\operatorname{std}(\{r_j:j\in\mathcal E_x\})
+\epsilon_{\mathrm{num}}
},
\quad i\in\mathcal E_x.
\end{split}
\label{eq: advantage}
\end{equation}

\noindent \textbf {Updating the policy.}
For each query \(q\sim P(Q)\), where \(P(Q)\) is the training query distribution, the behavior policy \(\pi_{\mathrm{old}}=\pi_{\theta_{\mathrm{old}}}\) generates a group of \(G\) responses \(o_1,\ldots,o_G\).
SIF updates the target policy \(\pi_\theta\) using a clipped group-relative objective.
The importance ratio
\(\rho_{i,t}(\theta)=
\frac{\pi_\theta(o_{i,t}\mid q,o_{i,<t})}
{\pi_{\mathrm{old}}(o_{i,t}\mid q,o_{i,<t})}\)
compares the target and behavior policies, and
\(\operatorname{clip}_\varepsilon\) clips this ratio to
\([1-\varepsilon,1+\varepsilon]\).
The coefficient \(\beta\) controls the KL penalty relative to the fixed reference policy \(\pi_{\mathrm{ref}}\).
The resulting optimization objective is
\begin{equation}
\label{eq: grpo}
\begin{aligned}
&\mathcal{J}_{\mathrm{GRPO}}(\theta)
= \mathbb{E}\Bigg[
\frac{1}{G}\sum_{i=1}^{G}
\frac{1}{|o_i|}\sum_{t=1}^{|o_i|} \\
&\quad \bigg(
\min\Big[
\rho_{i,t}(\theta)\hat{A}_{i,t},\,
\operatorname{clip}_{\varepsilon}
(\rho_{i,t}(\theta))\hat{A}_{i,t}
\Big] \\
&\qquad
-\beta\mathbb{D}_{\mathrm{KL}}
(\pi_{\theta}\|\pi_{\mathrm{ref}})
\bigg)\Bigg].
\end{aligned}
\end{equation}
SIF applies the response- and group-level selection rules described above to determine which actor contributions enter policy optimization.

\begin{figure*}
    \centering
\includegraphics[width=\linewidth]{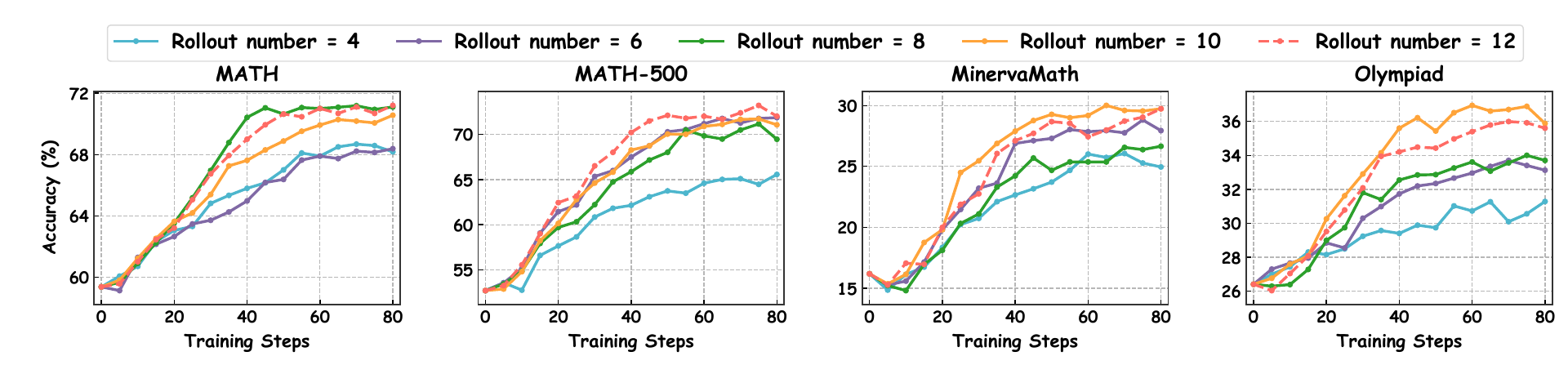}
\vspace{-20pt}
    \caption{Sustained improvements during Test-Time Training across four datasets with five rollout counts.}
    \vspace{-6pt}
\label{fig:rollout_comparison}
\end{figure*}

\section{Experiment}
\subsection{Training Setup}
\noindent \textbf {Baselines and Models.} We compare SIF with Intuitor, Vigor, TTRL, and SCOPE, alongside GRPO with ground-truth rewards. We consider two settings. (1) Validation Base: we train Llama-3.2-3B-Instruct~\cite{touvron2023llamaopenefficientfoundation}, Qwen2.5-Math-1.5B~\cite{yang2024qwen25mathtechnicalreportmathematical}, and Qwen3-1.7B~\cite{bai2023qwentechnicalreport} on MATH, select checkpoints using validation performance, and evaluate them on other reasoning benchmarks. (2) Test-time training: we adapt Qwen2.5-Math-1.5B and DeepSeek-R1-Distill-Qwen-1.5B~\cite{deepseekai2025deepseekr1incentivizingreasoningcapability} to unlabeled benchmark inputs for 80 steps. Checkpoints are evaluated at predefined steps, without using reference answers for optimization or checkpoint selection. Implementation and evaluation details are provided in Appendices~\ref{app:math_training}, \ref{app:test_time_config}, and~\ref{app:baseline_configs}.

\noindent \textbf {Evaluation Benchmarks.} We evaluate on 12 benchmarks covering two domains: mathematical reasoning on AIME~\cite{aime25,aime26}, AMC23, MATH500~\cite{hendrycks2021measuringmathematicalproblemsolving}, Minerva~\cite{lewkowycz2022solvingquantitativereasoningproblems}, Olympiad~\cite{he2024olympiadbenchchallengingbenchmarkpromoting}, GSM8K~\cite{cobbe2021trainingverifierssolvemath}, Omni~\cite{gao2024omnimathuniversalolympiadlevel}, and College~\cite{tang2024mathscalescalinginstructiontuning}; and scientific reasoning on JEEBench~\cite{arora2023llmsadvancedenoughchallenging}, TheoremQA~\cite{chen-etal-2023-theoremqa}, ScienceQA~\cite{lu2022learnexplainmultimodalreasoning}, and SciBench~\cite{wang2024scibenchevaluatingcollegelevelscientific}. Assess both mathematical problem and scientific knowledge.

\subsection{Main Results}
\noindent \textbf {Overall reasoning performance.} Figure~\ref{fig:training_dynamics} shows that SIF achieves higher training set accuracy than Intuitor throughout training and generally outperforms Vigor.
Table~\ref{tab:main_results} further evaluates SIF across 12 reasoning benchmarks.
At final checkpoint, SIF achieves average accuracies of 26.17\%, 39.19\%, and 45.29\% on Llama-3.2-3B-Instruct, Qwen2.5-Math-1.5B, and Qwen3-1.7B, respectively, yielding gains of 1.49, 15.17, and 6.11 percentage points over the initial models.
SIF outperforms Intuitor and Vigor on all three backbones and surpasses TTRL and SCOPE on both Qwen backbones.
These gains extend beyond mathematics: on Qwen2.5-Math-1.5B, SIF improves accuracy over initial model by 9.62\%, 26.21\%, and 7.69\% points on JEEBench, TheoremQA, and SciBench.

\noindent \textbf {Test-time training performance.}
Figure~\ref{fig:rollout_comparison} examines SIF with 4--12 rollouts per prompt over 80 adaptation steps on four mathematical benchmarks. All configurations improve over their initial performance. Across the entire dataset, larger rollout groups generally yield stronger performance, although the relationship is not strictly monotonic across datasets and checkpoints. Appendix~\ref{app:test_time_config}, Table~\ref{tab:test_time_improvement}, further reports checkpoint-level results on seven benchmarks. At step 80, both Pass@1 and Pass@8 exceed their initial values on every benchmark for Qwen2.5-Math-1.5B and DeepSeek-R1-Distill-Qwen-1.5B, showing that the gains persist to the predefined endpoint without correctness-based checkpoint selection.

\subsection{Ablation Study}
\label{sec:ablation}

\begin{table}[t]
    \centering
    \caption{Ablation results (\%) on five benchmarks using Qwen3-1.7B test time trained for 40 steps. Mid and Last denote the intermediate and final training
checkpoints.}
    \label{tab:ablation_benchmark_results}
    \setlength{\tabcolsep}{3pt}
    \renewcommand{\arraystretch}{1.15}
    \resizebox{\columnwidth}{!}{%
        \begin{tabular}{llcccccc}
            \toprule
            Method & Step
            & Math500
            & Minerva
            & Olympiad
            & College
            & JEEBench
             \\
            \midrule

            \rowcolor{lightblue} 
            & mid & 80.50 & 37.27 & 37.35 & 58.33 & 24.81  \\ \rowcolor{lightblue} \multirow{-2}{*}{\textbf{SIF}}
            & last & 80.30 & 36.53 & 37.24 & 57.68 & 24.47  \\

            \midrule
            \multirow{2}{*}{SIF w/o Peer}
            & mid & 79.45 & 36.17 & 34.11 & 57.13 & 23.11  \\
            & last & 79.20 & 35.85 & 33.17 & 55.86 & 23.11  \\
            \midrule
            \multirow{2}{*}{SIF w/o Window}
            & mid & 79.40 & 36.26 & 38.58 & 57.42 & 23.54  \\
            & last & 79.40 & 35.98 & 37.87 & 56.95 & 23.57 \\
           
            \midrule
            \multirow{2}{*}{Intuitor}
            & mid & 78.20 & 35.48 & 33.44 & 57.10 & 22.69  \\
            & last & 75.28 & 32.22 & 30.41 & 54.95 & 20.66  \\
             \midrule
             Vanilla & base
            & 73.02 & 33.18 & 32.91 & 54.84 & 19.22 \\
            \bottomrule
        \end{tabular}%
    }
\end{table}

\noindent \textbf {Component ablation.}
Table~\ref{tab:ablation_benchmark_results} examines the contributions of window-level and peer-level voting during test-time training.
Removing either component reduces the average accuracy across the five benchmarks: at the final checkpoint, removing peer-level voting lowers accuracy from 47.24\% to 45.44\%, while removing window-level voting lowers it to 46.75\%.
Both variants nevertheless outperform Intuitor, which achieves 42.70\%.
Full SIF achieves the highest accuracy on four of the five benchmarks at both the  intermediate and final checkpoints, although the variant without window-level voting performs better on Olympiad.
These results support the complementary contributions of the two voting components, with peer-level voting yielding a larger aggregate benefit in this setting.

\noindent \textbf {Key changes to Intuitor.}
We examine whether reward binarization or fixed-rate random
filtering can reproduce SIF's gains.
Top-2 assigns reward one to two responses with the highest
mean self-certainty per group and zero to others,
without filtering updates.
Figure~\ref{fig:Additional ablation} shows Top-2 improves
peak accuracy over Intuitor from 73.43\% to 75.89\%,
but falls to 65.54\% at step 40, versus 76.40\% for SIF.
Table~\ref{tab:ablation_validation} reports further decline
to 6.14\% at final checkpoint.
Retaining Intuitor's reward while randomly discarding
10\%, 20\%, or 40\% of responses yields similar peaks
(72.74\%--73.11\%) but final accuracies of 0\%, 3.12\%,
and 15.14\%, respectively.

\noindent \textbf{Comparison of gating strategies.}
Table~\ref{tab:gating_comparison} compares SIF with alternative gating strategies.
Random filtering discards responses uniformly at random, matching SIF's filtering rate at each training step.
Self-certainty, variance, and entropy thresholds are set away from the values observed when accuracy begins to decline in a preliminary run.
Length gating adopts SIF's 550-token threshold to isolate the effect of length-based filtering.
Random filtering underperforms Intuitor, while self-certainty, variance, and length gating improve final accuracy but lower peak accuracy.
SIF achieves the highest peak and final accuracies (73.4\% and 72.7\%), with a peak-to-final decline of only 0.7 percentage points versus 7.0 for Intuitor.
Matching filtering rates or applying length-based filtering alone is insufficient to reproduce SIF's gains.

\section{Analysis}

\begin{figure}
    \centering
    \vspace{-6pt}
\includegraphics[width=\linewidth]{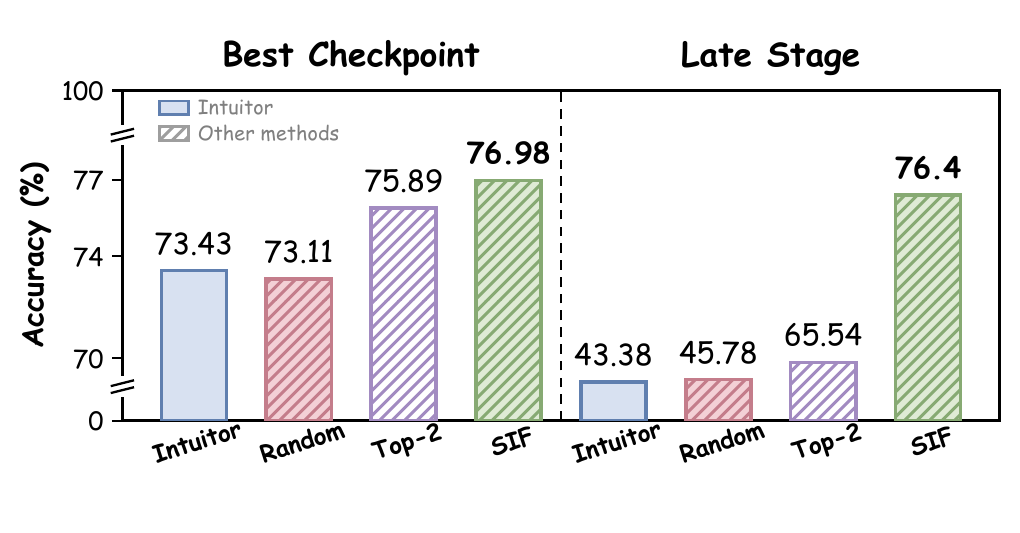}
\vspace{-32pt}
    \caption{Additional ablation on Qwen3-1.7B.}
    \vspace{-6pt}
\label{fig:Additional ablation}
\end{figure}

\subsection{Why Does Intuitor’s Deteriorate?}
\label{sec:Intuitor_collapse}
\noindent\textbf{Proxy accuracy divergence.}
Self-certainty reward in Intuitor can increase even as reasoning
accuracy declines.
Table~\ref{tab:diagnostic} in Appendix~\ref{app:ablation}
shows that Qwen3-1.7B's mean response self-certainty increases
from 55.03 to 61.54 between steps 30 and 80,
while validation accuracy decreases from 73.43\% to zero.
Self-certainty measures the divergence of token distributions
from the uniform vocabulary distribution, without directly
assessing reasoning correctness.
Moreover, averaging token-level self-certainty can yield a high
response-level score even when some tokens have low self-certainty.
Under group-relative normalization, an incorrect response
with a score above the group mean can therefore receive
a positive advantage.

\begin{table}[t]
\centering
\caption{Accuracy under different gating strategies, where Early denotes accuracy after a few training steps.}
\label{tab:gating_comparison}
\setlength{\tabcolsep}{8pt}
\renewcommand{\arraystretch}{1.15}
\small
\begin{adjustbox}{max width=\linewidth}
\begin{tabular}{lccc}
\toprule
\textbf{Method} & \textbf{Early} & \textbf{Best} & \textbf{Last} \\
\midrule
Intuitor                         & 0.453 & 0.697 & 0.627 \\
Random filtering & 0.448 & 0.676 & 0.618 \\
Self-certainty gating ($\leq 56$) & 0.457 & 0.689 & 0.643 \\
Variance gating ($\leq 58$)       & 0.451 & 0.681 & 0.656 \\
Length gating ($\geq 550$)      & 0.461 & 0.692 & 0.652 \\
Entropy gating ($\geq 0.08$)      & 0.288 & 0.667 & 0.614 \\
\rowcolor{lightblue} SIF                              & 0.563 & \textbf{0.734} & \textbf{0.727} \\
\bottomrule
\end{tabular}
\end{adjustbox}
\end{table}

\noindent\textbf{Self-certainty dispersion.}
Figure~\ref{fig:llama_qwen3_varianc} shows that Intuitor exhibits
greater within-response variance in token-level self-certainty
during late-stage training on both backbones.
The maximum gaps between the smoothed Intuitor and SIF curves
are 10.25 on Qwen3-1.7B and 6.46 on Llama-3.2-3B-Instruct.
These results complement the differing effects of ranked-band
rewards observed in Section~\ref{sec:observation}, highlighting
the limitations of summarizing token-level self-certainty
with a single response mean.
An increasing mean need not imply a uniform increase
in self-certainty across tokens.

\noindent\textbf{Lexical concentration.}
Figure~\ref{fig:top_word_share_violin} shows that, from Best to Last,
Intuitor's mean highest-frequency-word proportion increases
from 9.07\% to 11.53\%, compared with 9.35\% to 10.05\% for SIF.
The word clouds in Figure~\ref{fig:wordcloud} provide qualitative context.
The case study in Figure~\ref{fig:appendix-checkpoint-comparison}
illustrates how repetitive checking can coexist with an uncorrected error:
Intuitor's final response repeatedly enumerates and verifies divisors
but omits 142, despite explicitly computing $284/2=142$ earlier.
Its mean self-certainty nevertheless increases from 93.74
at the best checkpoint to 101.60 at the final checkpoint.
In this example, increased self-certainty and repeated checking
fail to prevent incorrect answer.

\subsection{How does SIF improve stability?}
\label{sec:sif_stability}
\noindent\textbf{Preferences across certainty bands.}
SIF constructs pairwise preferences by comparing responses
across multiple ranked self-certainty windows.
Each window contributes one vote, irrespective of the winning margin.
Under the default configuration, a response must win at least
eight of ten windows against a peer.
Consequently, large score advantages in only a few windows
cannot by themselves establish a pairwise preference.
Motivated by the differing training effects of certainty bands
observed in Section~\ref{sec:observation}, this design requires
comparative support across multiple bands rather than relying
solely on the response mean.

\begin{figure}
    \centering
\includegraphics[width=\linewidth]{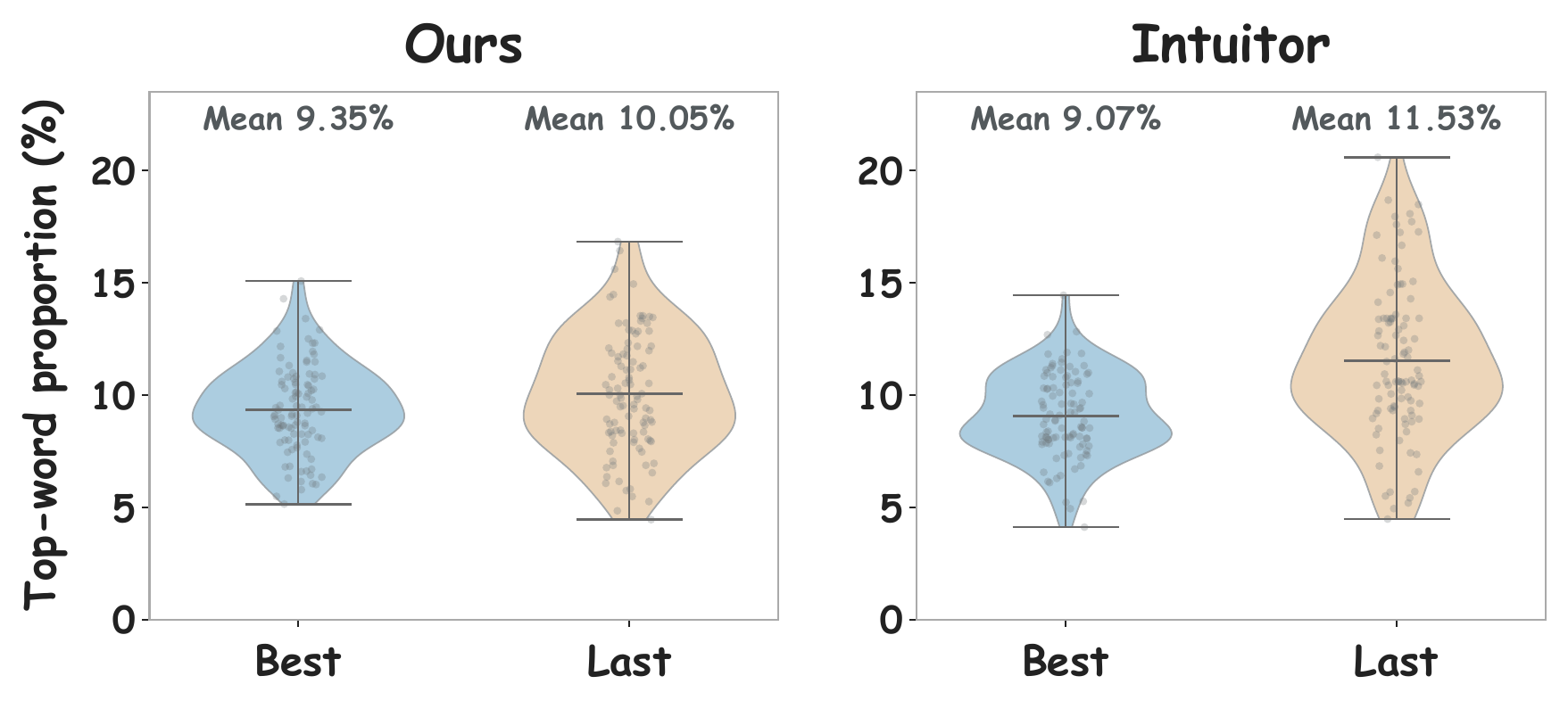}
\vspace{-19pt}
 \caption{Most frequent word proportion for SIF (left) and Intuitor (right).
Gray dots represent individual responses; horizontal lines indicate means.}
    \vspace{-2pt}
\label{fig:top_word_share_violin}
\end{figure}

\begin{table}[t]
\centering
\caption{Response accuracy (\%) before and after SIF filtering at early, mid, and final checkpoints.}
\label{tab:filtering_accuracy}
\setlength{\tabcolsep}{8pt}
\renewcommand{\arraystretch}{1.15}
\small
\begin{adjustbox}{max width=\linewidth}
\begin{tabular}{lccc}
\toprule
\textbf{Responses} & \textbf{Early} & \textbf{Mid} & \textbf{Last} \\
\midrule
Intuitor (full batch) & 62.30 & 68.55 & 24.61 \\
Vigor (full batch)    & 60.74 & 73.24 & 65.43 \\
SIF (full batch)      & 60.55 & 75.20 & 71.09 \\
\rowcolor{lightblue}
SIF (retained responses)
                     & \textbf{65.27} & \textbf{77.62} & \textbf{73.27} \\
\bottomrule
\end{tabular}
\end{adjustbox}
\end{table}

\noindent\textbf{Selective updates.}
SIF does not force a winner in every group.
Groups with identical rewards across eligible responses
are excluded from policy updates, avoiding forced preferences
based on relative rank alone.
Table~\ref{tab:filtering_accuracy} shows that retained responses
exceed full-batch accuracy by 4.72, 2.42, and 2.18 percentage
points at Early, Mid, and Last, respectively.
Thus, filtering selects a more accurate subset even late
in training, without using correctness labels.
SIF also exhibits a smaller Mid-to-Last decline in full-batch
accuracy: 4.11 percentage points, compared with 43.94 for
Intuitor and 7.81 for Vigor.
These are consistent with better performance
retention under selective internal feedback.

SIF combines multi-window and peer-level comparisons with
selective updates, requiring broader comparative support
before internal feedback is used for policy optimization.

\section{Related Work}
\noindent \textbf {Reinforcement learning for reasoning.}
RLHF aligns language models with human preferences through learned reward models~\cite{christiano2023deepreinforcementlearninghuman,ouyang2022traininglanguagemodelsfollow}.
For reasoning, RLVR uses verifiable outcomes, as demonstrated by T\"ulu~3~\cite{lambert2025tulu3pushingfrontiers}, DeepSeek-R1~\cite{Guo_2025}, and software engineering agents~\cite{da2025agentrlvrtrainingsoftwareengineering}.
GRPO, introduced in DeepSeekMath~\cite{shao2024deepseekmathpushinglimitsmathematical}, estimates advantages relative to responses to the same prompt.

\noindent \textbf {Internal feedback.}
Self-certainty supports best-of-$N$ selection~\cite{kang2025scalablebestofnselectionlarge} and serves as Intuitor's training reward, although continued optimization can degrade reasoning performance~\cite{zhang2025freelunchrethinkinginternal}.
Vigor instead uses gradient norm of response negative log-likelihood with length correction and group-relative reward shaping.
Answer agreement provides another feedback source: self-consistency aggregates sampled answers at inference~\cite{wang2023selfconsistencyimproveschainthought}, TTRL uses majority answers as pseudo-labels for test-time training~\cite{zuo2025ttrltesttimereinforcementlearning}, and SCOPE constructs subgroup-specific rewards through step wise confidence weighted voting~\cite{wang2026majorityvotingfinegrainedreliable}.

\noindent \textbf {Selection and  filtering.}
Confidence guides inference time compute allocation~\cite{Taubenfeld_2025,yang2025dynamicearlyexitreasoning}; DeepConf uses confidence statistics to filter traces and terminate unpromising generations~\cite{fu2025deepthinkconfidence}.
During training, DAPO filters groups with uniform correctness rewards and resamples informative batches~\cite{yu2025dapoopensourcellmreinforcement}, related work studies training efficiency and difficulty based filtering~\cite{zheng2025actpaysefficientreinforcement,bae2026onlinedifficultyfilteringreasoning}.
Excluding zero-advantage groups is thus an established technique.

\begin{figure}
\vspace{-24pt}
    \centering
\includegraphics[width=\linewidth]{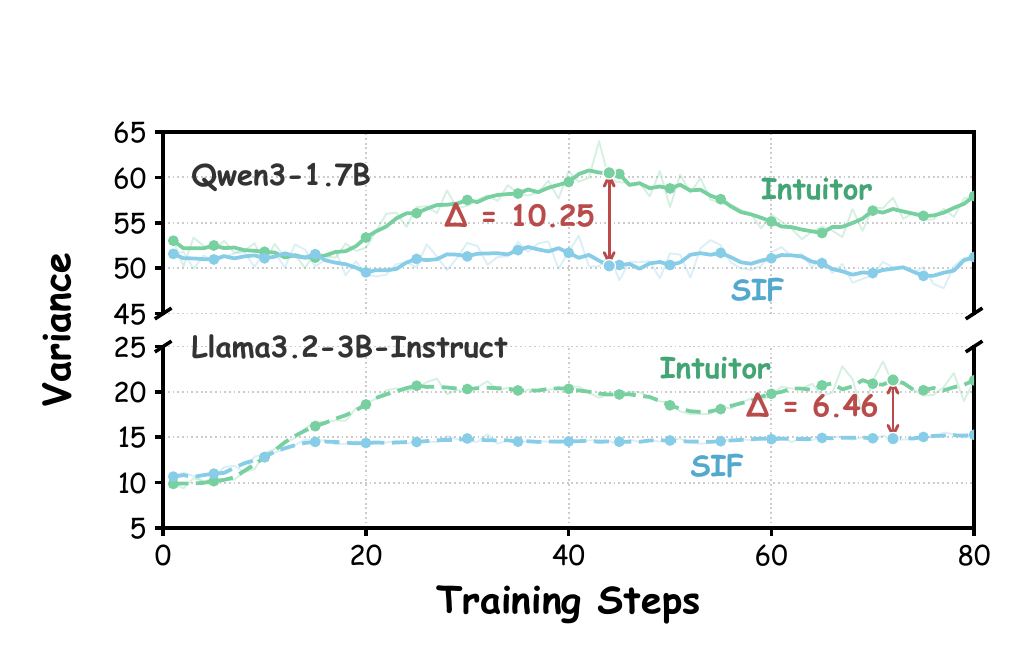}
\vspace{-19pt}
   \caption{
Variance dynamics of Intuitor and SIF on Qwen3-1.7B and Llama3.2-3B-Instruct. Arrows indicate the maximum gap between each model.
}
    \vspace{-6pt}
\label{fig:llama_qwen3_varianc}
\end{figure}

\section{Conclusion}
We introduced Selective Internal Feedback to mitigate
performance deterioration during continued self-certainty
optimization. SIF constructs binary rewards through
two-level voting over ranked token-level self-certainty
profiles and selectively updates the policy using eligible
responses and informative groups. Experiments across three
backbones and 12 reasoning benchmarks demonstrate improved
reasoning performance and better retention of training
gains, while test-time training yields sustained improvements
with as few as four rollouts per prompt. Ablations further
support the complementary contributions of voting-based
rewards and selective updates. These results highlight the
value of requiring broader comparative support for internal
feedback and controlling its use in policy optimization.

\section*{Limitations}
Due to hardware resource limitations, we were unable to conduct full-scale experiments on models with larger parameter counts. Our training dataset was relatively small; future work should involve experimentation on larger datasets. Additionally, our benchmark was primarily focused on mathematical and scientific reasoning; if conditions permit, future efforts should extend the scope to areas such as multi-hop reasoning and code generation.

\section*{Acknowledgments}
All datasets used in this work are publicly
available and contain no personal or sensitive information. Our experiments only involve open-source language models and do not
collect any human subjects data. Throughout this study, we have strictly followed
established ethical guidelines, ensuring that
our findings are reported honestly, transparently, and with full accuracy. The proposed method, \textsc{SIF}, aims to optimize the model's inference performance during a self-evolving training process that does not rely on ground-truth answers, a task that is inherently constructive and
low-risk. We do not foresee direct negative societal impacts introduced by this work. We plan to release our code and training configurations under open-source licenses to facilitate reproducibility.

% Bibliography entries for the entire Anthology, followed by custom entries
%\bibliography{custom,anthology-overleaf-1,anthology-overleaf-2}

% Custom bibliography entries only
% Bibliography
\nocite{*} % 强制显示 .bib 文件中的所有文献
\bibliography{custom}

\clearpage
\appendix

\section{Algorithm and Theoretical Analysis}
\label{sec:appendix_proofs}

\subsection{SIF Algorithm}
Algorithm~\ref{alg:sif} summarizes the SIF training procedure.
The peer-voting threshold $\tau_p\in(0,1]$ specifies the
minimum win fraction against other eligible responses,
with $\tau_p=3/4$ by default.

\begin{algorithm}[ht]
\caption{Selective Internal Feedback (\textsc{SIF})}
\label{alg:sif}
\small
\algrenewcommand\algorithmicindent{0.8em}
\begin{algorithmic}[1]
\Require Prompts $\mathcal D$; policies $\pi_\theta,\pi_{\mathrm{ref}}$
\Require $N,T,K,w,s,\tau_w,\tau_p,\beta,\varepsilon$;
         $\epsilon_{\mathrm{num}}>0$
\Ensure Trained policy $\pi_\theta$

\State Set $L_{\min}$ by Eq.~\eqref{eq:sif-eligibility}

\For{$\ell=1,\ldots,T$}
  \State $\pi_{\mathrm{old}}\gets\pi_\theta$;
         sample $\mathcal B\subset\mathcal D$
  \State Sample $\{y_i\}_{i=1}^{N}
         \sim\pi_{\mathrm{old}}(\cdot\mid x)$
         for each $x\in\mathcal B$
  \State Initialize token advantages $\hat A\gets0$
         and masks $M\gets0$

  \For{$x\in\mathcal B$}
    \State Obtain $\mathcal E_x$ by
           Eq.~\eqref{eq:sif-eligibility}
    \If{$|\mathcal E_x|<2$}
      \State \textbf{continue}
    \EndIf

    \State Construct $\{\mathbf z_i\}_{i\in\mathcal E_x}$
           under $\pi_{\mathrm{old}}$
    \Statex \hspace{1.6em}using
      Eqs.~\eqref{eq:sif-certainty}--\eqref{eq:sif-profile}

    \State Compute $v_{ij},b_{ij}$ using
    \Statex \hspace{1.6em}
      Eqs.~\eqref{eq:sif-window-votes}--\eqref{eq: grpo}
    \State $q_i\gets
      \sum_{j\in\mathcal E_x\setminus\{i\}}b_{ij}$
      for $i\in\mathcal E_x$
    \State $r_i\gets
      \mathbf{1}[q_i\geq\tau_p(|\mathcal E_x|-1)]$
      for $i\in\mathcal E_x$

    \If{all $\{r_i:i\in\mathcal E_x\}$ are equal}
      \State \textbf{continue}
    \EndIf

    \State $\mu_x\gets\operatorname{mean}_{i\in\mathcal E_x}r_i$;
           $\sigma_x\gets\operatorname{std}_{i\in\mathcal E_x}r_i$
    \State $\hat A_{x,i,t}\gets
           \frac{r_i-\mu_x}{\sigma_x+\epsilon_{\mathrm{num}}}$;
           $M_{x,i,t}\gets1$
    \Statex \hspace{1.6em}for $i\in\mathcal E_x$
           and $t=1,\ldots,L_i$
  \EndFor

  \If{$\sum_{x,i,t}M_{x,i,t}>0$}
    \State Optimize $\mathcal J_{\mathrm{GRPO}}(\theta)$
           with fixed $\hat A$
    \Statex \hspace{0.8em}and apply $M$ to both policy and KL terms
  \EndIf
\EndFor

\State \Return $\pi_\theta$
\end{algorithmic}
\end{algorithm}

\subsection{Theoretical Analysis}
\label{app:self_certainty_theory}
We examine the limitation of mean self-certainty identified in Sections~\ref{sec:introduction} and~\ref{sec:observation} through a constructive example. The analysis establishes that bounded, group-normalized updates can produce transient accuracy gains followed by deterioration under continued self-certainty optimization.

\noindent \textbf {Setting and scope.}
For a fixed prompt $q$, let
$\mathcal G=(o_1,\ldots,o_G)$ contain $G\geq2$ responses sampled
independently from the current policy. We write $c_\theta(q,o)$ for the
response-level self-certainty in Eq.~\eqref{eq:sif-certainty} and use
$r_i=c_\theta(q,o_i)$ as the reward. SIF's voting rewards
are defined separately in Eq.~\eqref{eq: grpo}.

Following the group-relative normalization in
Eq.~\eqref{eq: advantage}, define
\begin{equation}
\begin{aligned}
\bar r &= \frac{1}{G}\sum_{i=1}^{G}r_i,\\
\sigma_r^2 &= \frac{1}{G-1}\sum_{i=1}^{G}(r_i-\bar r)^2,\\
\hat A_{i,t}=\hat A_i
&=\frac{r_i-\bar r}{\sigma_r+\epsilon_{\mathrm{num}}}.
\end{aligned}
\label{eq:app_normalization}
\end{equation}
Here, $\sigma_r$ is the sample standard deviation, and $\epsilon_{\mathrm{num}}>0$ is a numerical stabilizer. The response-level advantage is shared across its token positions. Rewards and normalization statistics are held fixed when differentiating the policy surrogate.

We analyze continuous-time expected gradient ascent with $\beta=0$.
For $\varepsilon>0$, the clipping operation is locally inactive at
$\pi_\theta=\pi_{\mathrm{old}}$, where every importance ratio equals one.
The instantaneous gradient of the length-averaged GRPO surrogate is
therefore
\begin{equation}
g(\theta;\mathcal G)
=\frac{1}{G}\sum_{i=1}^{G}
\frac{\hat A_i}{|o_i|}
\nabla_\theta\log\pi_\theta(o_i\mid q).
\label{eq:app_group_gradient}
\end{equation}
The behavior policy is refreshed continuously, giving the dynamics
\begin{equation}
\begin{aligned}
\frac{d\theta(u)}{du}&=h(\theta(u)),\\
h(\theta)&=
\mathbb E_{\mathcal G\sim\pi_\theta^{\otimes G}}
[g(\theta;\mathcal G)],
\end{aligned}
\label{eq:app_mean_flow}
\end{equation}
where $u$ denotes optimization time and the sampling distribution is
conditioned on the fixed prompt $q$.

\begin{lemma}[Bounded group-relative advantages]
\label{lem:bounded_advantages}
For any finite rewards $r_1,\ldots,r_G$ with $G\geq2$, the advantages
in Eq.~\eqref{eq:app_normalization} satisfy
\begin{equation}
\begin{aligned}
|\hat A_i|&\leq\frac{G-1}{\sqrt G}\leq\sqrt{G-1},\\
\frac{1}{G}\sum_{i=1}^{G}\hat A_i^2
&\leq\frac{G-1}{G}\leq1.
\end{aligned}
\label{eq:advantage_bound}
\end{equation}
\end{lemma}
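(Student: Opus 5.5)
Write $d_i=r_i-\bar r$ and $S=\sum_{j=1}^G d_j^2=(G-1)\sigma_r^2$. The plan is to reduce both bounds in Eq.~\eqref{eq:advantage_bound} to elementary facts about the centered vector $(d_1,\ldots,d_G)$, which satisfies $\sum_j d_j=0$. First dispose of the degenerate case $S=0$. Then every $d_i=0$, so every $\hat A_i=0$ and both bounds hold trivially. In the remaining case $\sigma_r>0$, and since $\epsilon_{\mathrm{num}}>0$ we have $|\hat A_i|\le |d_i|/\sigma_r$. It therefore suffices to bound the unstabilized ratios $d_i/\sigma_r$; the stabilizer only shrinks magnitudes.

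For the pointwise bound, fix $i$ and use $\sum_{j\ne i} d_j=-d_i$. Cauchy--Schwarz over the $G-1$ remaining indices gives $\sum_{j\ne i} d_j^2\ge d_i^2/(G-1)$. Hence $S\ge d_i^2\bigl(1+\tfrac{1}{G-1}\bigr)=d_i^2\,\tfrac{G}{G-1}$, and so
\begin{equation*}
\frac{d_i^2}{\sigma_r^2}=\frac{(G-1)\,d_i^2}{S}\le\frac{(G-1)^2}{G},
\end{equation*}
which yields $|\hat A_i|\le (G-1)/\sqrt G$. The comparison $(G-1)/\sqrt G\le\sqrt{G-1}$ is equivalent to $G-1\le G$, so it is immediate.

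For the mean-square bound, sum directly:
\begin{equation*}
\frac1G\sum_i \hat A_i^2\le \frac1G\sum_i \frac{d_i^2}{\sigma_r^2}=\frac{1}{G}\cdot\frac{S}{S/(G-1)}=\frac{G-1}{G}\le 1.
\end{equation*}

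There is no real obstacle here. The only point needing care is that the paper uses the sample ($G-1$) normalization rather than the population one, which is exactly where the $(G-1)/G$ factor comes from. The other point is the $\epsilon_{\mathrm{num}}$ and degenerate-variance handling. Tightness of the pointwise bound, attained when one reward differs and the rest are equal, could be noted as a remark but is not needed.
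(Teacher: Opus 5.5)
Your proof is correct and follows essentially the same route as the paper: center the rewards, apply Cauchy--Schwarz to $\sum_{j\neq i}(r_j-\bar r)=-(r_i-\bar r)$ over the remaining $G-1$ indices to get $|r_i-\bar r|\le (G-1)\sigma_r/\sqrt G$, then sum the squares using the sample normalization $\sum_i (r_i-\bar r)^2=(G-1)\sigma_r^2$, noting that $\epsilon_{\mathrm{num}}>0$ only shrinks the ratios and the $\sigma_r=0$ case is trivial. The only cosmetic difference is that you handle the degenerate case first and drop the stabilizer before summing, whereas the paper handles the degenerate case last and keeps the factor $\sigma_r^2/(\sigma_r+\epsilon_{\mathrm{num}})^2$ explicitly.
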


\begin{proof}
Let $v_i=r_i-\bar r$, so that $\sum_i v_i=0$ and
$\sum_i v_i^2=(G-1)\sigma_r^2$. By the Cauchy--Schwarz inequality,
\begin{equation}
\begin{aligned}
v_i^2
&=\left(\sum_{j\neq i}v_j\right)^2
\leq(G-1)\sum_{j\neq i}v_j^2\\
&=(G-1)\big((G-1)\sigma_r^2-v_i^2\big).
\end{aligned}
\end{equation}
Hence $|v_i|\leq(G-1)\sigma_r/\sqrt G$. Since
$\epsilon_{\mathrm{num}}>0$, we obtain
\begin{equation}
\begin{aligned}
|\hat A_i|
&\leq\frac{G-1}{\sqrt G}
\frac{\sigma_r}{\sigma_r+\epsilon_{\mathrm{num}}}
\leq\frac{G-1}{\sqrt G},\\
\frac{1}{G}\sum_{i=1}^{G}\hat A_i^2
&=\frac{G-1}{G}
\frac{\sigma_r^2}{(\sigma_r+\epsilon_{\mathrm{num}})^2}
\leq\frac{G-1}{G}.
\end{aligned}
\end{equation}
When $\sigma_r=0$, all rewards are equal and every advantage is zero,
so the same bounds apply.
\end{proof}

The lemma bounds normalized advantages independently of reward magnitude. The following construction additionally bounds the policy gradient itself while exhibiting a divergence between self-certainty and correctness.

\begin{theorem}[Transient accuracy gains and collapse]
\label{thm:exact_collapse}
Fix $G\geq2$ and $\epsilon_{\mathrm{num}}>0$. There exists a two-token
autoregressive policy with a scalar parameter $\theta$, initialized at
$\theta(0)=0$, for which the dynamics in
Eq.~\eqref{eq:app_mean_flow} have the following properties:
\begin{enumerate}
    \item Greedy accuracy on a single-prompt task follows a
    $0\to1\to0$ transition.
    \item The probability of sampling a correct response first increases
    to a unique maximum and then strictly decreases to zero.
    \item After this maximum, expected mean self-certainty strictly
    increases and diverges to infinity.
\end{enumerate}
Throughout the process, every sampled group satisfies
$0\leq g(\theta;\mathcal G)\leq1/2$, and all second-token conditional
distributions remain fixed.
\end{theorem}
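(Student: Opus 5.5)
We prove the theorem with an explicit construction in which the scalar $\theta$ moves only the first-token logits, and the second token sets the response-level self-certainty ranking. Take a first-token vocabulary containing three tokens $w_1,c,w_2$, where $c$ is the unique correct first token. Assign them logits
\[
\ell_{w_1}(\theta)=1,\qquad \ell_c(\theta)=\theta,\qquad \ell_{w_2}(\theta)=2\theta-3,
\]
so the logit slopes are $s=(0,1,2)$. Any further vocabulary entries get constant logits. After each first token the second-token conditional distribution is fixed and independent of $\theta$. We choose these three distributions so that their self-certainties satisfy $C_{w_1}<C_c<C_{w_2}$. Every response has length $2$ and shares the prompt $q$, so the first-position certainty is common to the whole group. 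Hence $r_i$ is a strictly increasing function of the first token of $o_i$, and $r_i$ is therefore comonotone with its slope $s_i$.

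\textbf{Step 1: the per-group gradient bound (the main obstacle).} The second tokens do not depend on $\theta$. Therefore $\nabla_\theta\log\pi_\theta(o_i\mid q)=s_i-\mathbb E_{p_\theta}[s]$. Because $\sum_i\hat A_i=0$, the expectation term cancels in Eq.~\eqref{eq:app_group_gradient}, leaving
\[
g(\theta;\mathcal G)=\frac{1}{2G}\sum_i\hat A_i s_i.
\]
For the lower bound, write $\sum_i\hat A_i s_i=\sum_i(r_i-\bar r)(s_i-\bar s_{\mathcal G})/(\sigma_r+\epsilon_{\mathrm{num}})$, where $\bar s_{\mathcal G}$ is the group mean of the slopes. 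This is a nonnegative multiple of the empirical covariance of $r$ and $s$, and that covariance is nonnegative because the two are comonotone. So $g\geq0$.

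For the upper bound, use $\sum_i\hat A_i=0$ again to write $\sum_i\hat A_i s_i=\sum_i\hat A_i(s_i-1)$. Cauchy--Schwarz and Lemma~\ref{lem:bounded_advantages} give
\[
\sum_i\hat A_i(s_i-1)\leq\sqrt{G-1}\,\sqrt{G}\leq G,
\]
using $|s_i-1|\leq1$. Hence $g\leq1/2$. This requirement, that the bound hold for \emph{every} sampled group and not just in expectation, is what drove the choice of comonotone slopes and constant second-token distributions.

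\textbf{Step 2: monotone, unbounded drift.} All three first tokens have positive probability for every $\theta$. So with positive probability a group contains two different first tokens, and in that case $g>0$. Therefore $h(\theta)>0$, and $h$ is continuous. Continuity gives a positive lower bound for $h$ on every compact interval $[0,\Theta]$. Hence $\theta(u)$ increases strictly and passes every $\Theta$ in finite time, so $\theta(u)\to\infty$.

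\textbf{Step 3: reading off the three properties along $\theta\uparrow\infty$.}

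\emph{Greedy accuracy.} The greedy first token is $w_1$ for $\theta<1$, $c$ for $1<\theta<3$, and $w_2$ for $\theta>3$. This is the $0\to1\to0$ transition.

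\emph{Probability of a correct response.} Define $m(\theta)=\mathbb E_{p_\theta}[s]$. Then
\[
\tfrac{d}{d\theta}\log p_c=1-m(\theta),\qquad m'(\theta)=\operatorname{Var}_{p_\theta}(s)>0.
\]
At $\theta=0$ the first-token probabilities are proportional to $(e,1,e^{-3})$, so $m(0)<1$; and $m(\theta)\to2$ as $\theta\to\infty$. Since $m$ is strictly increasing, it crosses $1$ exactly once, at some $\theta^\star$. Hence $p_c$ has a unique maximum at $\theta^\star$, then strictly decreases, and $p_c\to0$.

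\emph{Expected mean self-certainty.} It equals $\tfrac12\bigl(\kappa(\theta)+\sum_k p_kC_k\bigr)$, where $\kappa(\theta)$ is the first-position certainty.
\begin{itemize}
\item Since $\kappa(\theta)=\log\sum_v e^{\ell_v}-\tfrac{1}{|\mathcal V|}\sum_v\ell_v-\log|\mathcal V|$, its derivative is $m(\theta)-\bar s_{\mathcal V}$, where $\bar s_{\mathcal V}$ is the average slope over the first-token vocabulary. If the vocabulary is exactly $\{w_1,c,w_2\}$, then $\bar s_{\mathcal V}=1$. Thus $\kappa'>0$ exactly when $\theta>\theta^\star$.
\item The derivative of $\sum_kp_kC_k$ is $\operatorname{Cov}_{p_\theta}(C,s)>0$, by comonotonicity.
\item As $\theta\to\infty$, $\log p_{w_1}\to-\infty$, so $\kappa\to\infty$.
\end{itemize}
Together these give strict increase after the maximum and divergence to infinity.
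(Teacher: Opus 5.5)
Your proposal is correct and follows essentially the same construction as the paper: an exponentially tilted three-token first step with slopes $(0,1,2)$, and fixed second-token distributions whose self-certainty increases with the slope. As in the paper, the shared first-position certainty cancels under reward centering, $\log p_c$ has derivative $1-m(\theta)$, and the first-position certainty has derivative $m(\theta)-1$ because the vocabulary-average slope is $1$. The only difference is that the paper calibrates $\mathrm{KL}(U\|\nu_b)=2d_b$, so reward deviations equal slope deviations and $g=(G-1)\sigma_r^2/\bigl(2G(\sigma_r+\epsilon_{\mathrm{num}})\bigr)$ in closed form. You assume only the ordering $C_{w_1}<C_c<C_{w_2}$ and obtain $0\le g\le 1/2$ from Chebyshev's sum inequality together with Cauchy--Schwarz and Lemma~\ref{lem:bounded_advantages}, which is a mild generalization of the same argument.
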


\begin{proof}
\textit{Policy and reward construction.}
Take $\mathcal V=\{B,C,W\}$ and let $U$ be uniform on this vocabulary.
For the fixed prompt $q$, a response $o=(o_1,o_2)$ is correct exactly
when $o_1=C$. Thus, the first token selects a correct or incorrect
branch, and correctness is independent of the second token.

Assign branch constants $(d_B,d_C,d_W)=(0,1,2)$ and choose fixed,
full-support distributions $\nu_b$ satisfying
\begin{equation}
\mathrm{KL}(U\|\nu_b)=2d_b,
\qquad b\in\mathcal V.
\end{equation}
These distributions exist because
$\nu(a)=(a,(1-a)/2,(1-a)/2)$ has full support for $a\in[1/3,1)$.
Along this path, $\mathrm{KL}(U\|\nu(a))$ is continuous, equals zero
at $a=1/3$, and tends to infinity as $a\to1$.

Let $p_0(B)=0.80$, $p_0(C)=0.19$, and $p_0(W)=0.01$, and define
\begin{equation}
\begin{aligned}
p_\theta(b)&=\frac{p_0(b)e^{d_b\theta}}{Z(\theta)},\\
Z(\theta)&=\sum_{b\in\mathcal V}p_0(b)e^{d_b\theta}.
\end{aligned}
\label{eq:app_routing_policy}
\end{equation}
The complete response policy is
$\pi_\theta(o\mid q)=p_\theta(o_1)\nu_{o_1}(o_2)$.
By Eq.~\eqref{eq:sif-certainty}, its response-level reward:
\begin{equation}
c_\theta(q,o)
=\frac12\mathrm{KL}(U\|p_\theta)+d_{o_1}.
\label{eq:app_constructed_reward}
\end{equation}
This reward uses only the policy's token distributions; the correctness
indicator is used solely to evaluate the construction.

\medskip\noindent
\textit{Bounded updates and the parameter trajectory.}
Write $b_i=o_{i,1}$ and
$\bar d=G^{-1}\sum_{i=1}^{G}d_{b_i}$ for a sampled group.
The shared first-token contribution cancels under reward centering:
\begin{equation}
\begin{aligned}
r_i-\bar r&=d_{b_i}-\bar d,\\
\sigma_r^2&=\frac{1}{G-1}
\sum_{i=1}^{G}(d_{b_i}-\bar d)^2.
\end{aligned}
\end{equation}
Let $\mu(\theta)=\mathbb E_{b\sim p_\theta}[d_b]$.
Since the second-token distributions are fixed,
\begin{equation}
\nabla_\theta\log\pi_\theta(o_i\mid q)
=d_{b_i}-\mu(\theta).
\end{equation}
Substituting into Eq.~\eqref{eq:app_group_gradient} gives
\begin{equation}
\begin{aligned}
g(\theta;\mathcal G)
&=\frac{1}{2G}\sum_{i=1}^{G}
\frac{(d_{b_i}-\bar d)(d_{b_i}-\mu(\theta))}
{\sigma_r+\epsilon_{\mathrm{num}}}\\
&=\frac{(G-1)\sigma_r^2}
{2G(\sigma_r+\epsilon_{\mathrm{num}})}.
\end{aligned}
\label{eq:app_scalar_gradient}
\end{equation}
The second equality uses $\sum_i(d_{b_i}-\bar d)=0$.
Because $d_b\in[0,2]$,
\begin{equation}
\begin{aligned}
\frac{G-1}{G}\sigma_r^2
&=\frac1G\sum_{i=1}^{G}(d_{b_i}-1)^2
-(\bar d-1)^2\\
&\leq1.
\end{aligned}
\end{equation}
Consequently,
\begin{equation}
0\leq g(\theta;\mathcal G)
\leq\frac{G-1}{2G}\sigma_r
\leq\frac12\sqrt{\frac{G-1}{G}}
\leq\frac12.
\end{equation}

Every branch has positive probability at each finite $\theta$.
Since $G\geq2$, groups containing different branches occur with positive
probability and have $\sigma_r>0$. Thus $h(\theta)>0$ at every finite
$\theta$. For each fixed branch configuration,
Eq.~\eqref{eq:app_scalar_gradient} gives a constant group gradient.
Thus $h$ is a finite weighted sum of constants with smooth sampling
probabilities. It is smooth and bounded, so the scalar flow has a
unique solution for all $u\geq0$, and $\theta(u)$ strictly increases.

If $\theta(u)$ converged to a finite value $L$, continuity would give
$h(\theta(u))\to h(L)>0$, contradicting convergence. Hence
\begin{equation}
\lim_{u\to\infty}\theta(u)=+\infty.
\label{eq:app_parameter_limit}
\end{equation}

\medskip\noindent
\textit{Greedy and sampled correctness.}
The comparisons $p_\theta(B)=p_\theta(C)$ and
$p_\theta(C)=p_\theta(W)$ give
\begin{equation}
\begin{aligned}
\theta_1&=\log(0.80/0.19)\approx1.438,\\
\theta_2&=\log(0.19/0.01)\approx2.944.
\end{aligned}
\end{equation}
Break first-token ties according to the fixed priority $B,C,W$.
The greedy first token is then
\begin{equation}
\operatorname{Greedy}_1(\theta)=
\begin{cases}
B, & 0\leq\theta\leq\theta_1,\\
C, & \theta_1<\theta\leq\theta_2,\\
W, & \theta>\theta_2.
\end{cases}
\end{equation}
Because $\theta(u)$ increases from zero to infinity, greedy accuracy
follows the claimed $0\to1\to0$ transition.

The probability of sampling a correct response is $p_\theta(C)$, with
\begin{equation}
\begin{aligned}
\frac{d}{d\theta}p_\theta(C)
&=p_\theta(C)\big(1-\mu(\theta)\big),\\
\mu'(\theta)&=\operatorname{Var}_{p_\theta}(d)>0.
\end{aligned}
\end{equation}
Here, $\mu(0)=0.21$, and $\mu(\theta)=1$ exactly when
$p_\theta(B)=p_\theta(W)$. This occurs at
\begin{equation}
\theta^*=\frac12\log(0.80/0.01)\approx2.191.
\end{equation}
Therefore, $p_\theta(C)$ strictly increases up to $\theta^*$ and
strictly decreases thereafter, attaining
\begin{equation}
p_{\theta^*}(C)
=\frac{0.19}{0.19+2\sqrt{0.80\cdot0.01}}
\approx0.5151.
\end{equation}
Finally,
\begin{equation}
p_\theta(C)
=\frac{0.19e^\theta}{0.80+0.19e^\theta+0.01e^{2\theta}}
\longrightarrow0
\end{equation}
as $\theta\to+\infty$, and therefore as $u\to\infty$.

\medskip\noindent
\textit{Increasing self-certainty during deterioration.}
Define the expected response-level self-certainty by
\begin{equation}
\begin{aligned}
R(\theta)
&=\mathbb E_{o\sim\pi_\theta(\cdot\mid q)}[c_\theta(q,o)]\\
&=\frac12\mathrm{KL}(U\|p_\theta)+\mu(\theta).
\end{aligned}
\end{equation}
Since $U$ is uniform and $(d_B+d_C+d_W)/3=1$,
\begin{equation}
\mathrm{KL}(U\|p_\theta)
=\mathrm{KL}(U\|p_0)+\log Z(\theta)-\theta.
\end{equation}
Using $(\log Z)'=\mu$ yields
\begin{equation}
R'(\theta)
=\frac12\big(\mu(\theta)-1\big)
+\operatorname{Var}_{p_\theta}(d)>0
\end{equation}
for $\theta\geq\theta^*$. Thus expected self-certainty strictly
increases throughout the decline in correct sampling probability.
Furthermore, $Z(\theta)\geq0.01e^{2\theta}$ implies that
$\mathrm{KL}(U\|p_\theta)\to\infty$, so $R(\theta)\to\infty$.
All conditional distributions $\nu_b$ remain fixed by construction,
which completes the proof.
\end{proof}

\noindent \textbf {Implications for internal feedback.}
It must be acknowledged that, given the high complexity inherent in the actual sampling and generation processes of large language models, we have significantly simplified this process to derive an illustrative, simplified model. This example formalizes a possible divergence between mean
self-certainty and correctness under continued optimization. The fixed conditional distributions isolate a
change in branch selection as the source of deterioration in this
constructed policy.

The result supports the motivation for examining reward granularity and update participation in Section~\ref{sec:observation}.
SIF addresses these design choices through ranked-profile comparisons, two-level voting, and selective updates. The benefits of these components are evaluated empirically. The theoretical result concerns the specified continuous-time dynamics with $\beta=0$. Analyzing finite-step updates with KL regularization, or establishing stability guarantees for SIF, requires separate analysis.

\section{Experimental Setup and Reproducibility}
The main experiments in our paper fall into two categories: one involves training at test time, while the other entails RLIF training on the MATH dataset followed by evaluation on benchmarks. 

\begin{table*}[t]
\centering
\caption{
Pass@1 and Pass@8 performance across training steps.
$\Delta$ denotes the improvement from Base to Step 80
in percentage points.
}
\label{tab:test_time_improvement}
\setlength{\tabcolsep}{3.2pt}
\renewcommand{\arraystretch}{1.15}
\small
\resizebox{\textwidth}{!}{
\begin{tabular}{l*{14}{c}}
\toprule
\multirow{2.5}{*}{\textbf{Checkpiont}}
& \multicolumn{2}{c}{MATH}
& \multicolumn{2}{c}{MATH500}
& \multicolumn{2}{c}{Minerva}
& \multicolumn{2}{c}{Olympiad}
& \multicolumn{2}{c}{CollegeMath}
& \multicolumn{2}{c}{JEEBench}
& \multicolumn{2}{c}{TheoremQA} \\
\cmidrule(lr){2-3}
\cmidrule(lr){4-5}
\cmidrule(lr){6-7}
\cmidrule(lr){8-9}
\cmidrule(lr){10-11}
\cmidrule(lr){12-13}
\cmidrule(lr){14-15}
& Pass@1 & Pass@8
& Pass@1 & Pass@8
& Pass@1 & Pass@8
& Pass@1 & Pass@8
& Pass@1 & Pass@8
& Pass@1 & Pass@8
& Pass@1 & Pass@8 \\
\midrule

\multicolumn{15}{l}{\textbf{Backbone Model: Qwen2.5-Math-1.5B}} \\

Vanilla 
& 55.35 & 85.88 & 50.00 & 85.20
& 14.84 & 39.34 & 25.43 & 52.08
& 31.96 & 63.17 & 12.11 & 46.60
& 13.91 & 37.88 \\

Global Step 10
& 57.09 & 86.12 & 52.20 & 86.60
& 15.95 & 40.44 & 26.63 & 52.37
& 34.92 & 64.30 & 12.74 & 47.96
& 15.62 & 40.75 \\

Global Step 20
& 60.50 & 86.36 & 58.70 & 88.20
& 17.92 & 43.38 & 28.23 & 53.41
& 40.60 & 66.54 & 16.12 & 51.65
& 20.97 & 44.75 \\

Global Step 30
& 64.22 & 87.36 & 62.62 & 89.60
& 20.36 & 42.65 & 29.45 & 55.04
& 47.40 & 67.00 & 17.26 & 51.65
& 24.39 & 46.75 \\

Global Step 40
& 67.56 & 88.74 & 65.10 & 88.20
& 21.55 & 43.75 & 31.82 & 54.60
& 51.96 & 67.21 & 16.89 & 50.29
& 25.50 & 45.50 \\

Global Step 80
& 71.45 & 88.90 & 69.25 & 88.60
& 23.81 & 44.49 & 33.72 & 56.82
& 54.97 & 67.10 & 15.51 & 50.87
& 27.17 & 47.00 \\

\rowcolor{lightblue}
$\Delta$
& +16.10 & +3.02 & +19.25 & +3.40
& +8.97 & +5.15 & +8.29 & +4.74
& +23.01 & +3.93 & +3.40 & +4.27
& +13.26 & +9.12 \\
\midrule

\multicolumn{15}{l}{\textbf{Backbone Model: DeepSeek-R1-Distill-Qwen-1.5B}} \\

Vanilla 
& 69.99 & 90.06 & 67.27 & 88.60
& 28.03 & 48.53 & 32.72 & 55.93
& 51.73 & 67.32 & 16.99 & 45.24
& 25.50 & 45.75 \\

Global Step 10
& 72.47 & 90.58 & 69.85 & 88.40
& 29.69 & 52.21 & 34.27 & 57.27
& 52.35 & 68.24 & 18.93 & 48.35
& 25.86 & 47.38 \\

Global Step 20
& 75.85 & 92.54 & 73.15 & 90.40
& 28.77 & 50.00 & 37.50 & 59.35
& 53.55 & 68.10 & 20.68 & 50.49
& 26.66 & 46.50 \\

Global Step 30
& 77.95 & 93.48 & 74.52 & 89.80
& 27.94 & 47.06 & 39.37 & 59.50
& 55.84 & 69.16 & 21.24 & 52.04
& 27.77 & 47.12 \\

Global Step 40
& 78.14 & 92.86 & 76.80 & 92.00
& 29.78 & 48.90 & 40.39 & 60.83
& 56.75 & 69.62 & 21.38 & 51.65
& 28.03 & 47.38 \\

Global Step 80
& 79.08 & 93.30 & 76.62 & 91.20
& 30.84 & 49.26 & 40.93 & 63.50
& 57.40 & 69.87 & 22.65 & 53.20
& 28.59 & 48.38 \\

\rowcolor{lightblue}
$\Delta$
& +9.09 & +3.24 & +9.35 & +2.60
& +2.81 & +0.73 & +8.21 & +7.57
& +5.67 & +2.55 & +5.66 & +7.96
& +3.09 & +2.63 \\
\bottomrule
\end{tabular}
}
\end{table*}

\subsection{Test-Time Training}
\label{app:test_time_config}
We performed dataset-level test-time training using the internal voting rewards of \textsc{SIF}. Each run was initialized from the selected pretrained model. Training used the test split of each mathematical or scientific benchmark. The training budget was fixed at 80 steps. Each step sampled 64 prompts and generated eight responses per prompt, yielding 512 candidate responses before masking. Table~\ref{tab:test_time_improvement} presents the performance of SIF using test-time training on the Qwen2.5-Math-1.5B and DeepSeek-R1-Distill-Qwen-1.5B models across various benchmarks. Table~\ref{tab:tta_optimization} summarizes the optimization settings, and Table~\ref{tab:tta_voting_generation} reports the voting and generation settings.

\noindent \textbf {Test-time performance across training checkpoints.}
Table~\ref{tab:test_time_improvement} further evaluates SIF checkpoints using Pass@1 and Pass@8. From the initial model to training step 80, both metrics improve on all seven evaluated benchmarks for Qwen2.5-Math-1.5B and DeepSeek-R1-Distill-Qwen-1.5B. Averaged across these benchmarks, Qwen2.5-Math gains 13.18 points in Pass@1 and 4.80 points in Pass@8, while the distilled DeepSeek model gains 6.27 and 3.90 points, respectively. For Qwen2.5-Math, the largest Pass@1 gain occurs on CollegeMath (+23.01 points), whereas the largest Pass@8 gain occurs on TheoremQA (+9.12 points). For the distilled model, Pass@8 improves by 7.57 points on Olympiad and 7.96 points on JEEBench. Although individual trajectories fluctuate, the positive endpoint gains under both evaluation measures show that SIF training improves test-time performance beyond the initial checkpoints.

\begin{table}[t]
    \centering
    \small
    \setlength{\tabcolsep}{4pt}
     \caption{Optimization settings for test-time training.
    The PPO mini-batch size is expressed in prompts; each
    prompt contributes eight sampled responses before masking.
    Run-specific exceptions are described in the text.}
    \renewcommand{\arraystretch}{1.08}
    \begin{tabularx}{\columnwidth}{
        @{}>{\raggedright\arraybackslash}X
        >{\raggedright\arraybackslash}p{0.43\columnwidth}@{}
    }
        \toprule
        \textbf{Parameter} & \textbf{Value} \\
        \midrule
        Training budget & 80 steps \\
        Prompt batch size & 64 \\
        Responses per prompt & 8 \\
        PPO mini-batch size & 64 \\
        Micro-batch per GPU & 4 \\
        PPO epochs per batch & 1 \\
        Optimizer & AdamW \\
        Adam coefficients & $(0.9, 0.999)$ \\
        Peak learning rate & $3 \times 10^{-6}$ \\
        Learning-rate schedule & Cosine decay \\
        Warmup & 8 steps (10\%) \\
        Weight decay & 0.01 \\
        PPO clipping range & $[0.8, 1.2]$ \\
        Gradient clipping norm & 1.0 \\
        KL loss coefficient & 0.005 \\
        KL estimator & \texttt{low\_var\_kl} \\
        KL penalty in reward & Disabled \\
        Entropy coefficient & 0 \\
        Parameter updates & Full-model fine-tuning \\
        Gradient checkpointing & Enabled \\
        Distributed strategy & FSDP \\
        Hardware allocation & 1 node, 8 A100 40G GPUs \\
        Random seed & 123 \\
        Checkpoint steps & 10, 20, 30, 40, 80 \\
        \bottomrule
    \end{tabularx}
    \label{tab:tta_optimization}
\end{table}

\noindent \textbf {Internal voting rewards.}
For each response, valid tokens were sorted in descending
order of self-certainty.
We computed mean self-certainty within ten windows of
100 tokens each.
The windows spanned the sorted sequence from the highest-
to the lowest-certainty tokens, with starting positions
distributed evenly between the two endpoints.
A response won a pairwise comparison when its window mean
was strictly higher in at least eight of the ten windows.
A response received a binary reward of one when its
win fraction against the other eligible responses
to the same prompt reached the peer-voting threshold
$\tau_p=3/4$, and zero otherwise.
These rewards were normalized within each prompt group
using the GRPO mean and standard deviation.

The minimum eligible response length was
$100 + 50 \times (10-1) = 550$ tokens.
At this length, consecutive windows had a stride of 50 tokens;
for longer responses, the spacing increased with sequence length.
Responses shorter than 550 tokens were excluded from updates.
Prompt groups with identical rewards across all eligible
responses were also excluded.
The resulting mask was applied to both the policy-gradient
and KL losses.

\begin{table}[t]
    \centering
    \small
    \setlength{\tabcolsep}{4pt}
    \caption{Voting and generation settings. $^\dagger$The DeepSeek-R1-Distill-Qwen-1.5B uses a maximum prompt length of 4,096 tokens.}
    \renewcommand{\arraystretch}{1.08}
    \begin{tabularx}{\columnwidth}{
        @{}>{\raggedright\arraybackslash}X
        >{\raggedright\arraybackslash}p{0.43\columnwidth}@{}
    }
        \toprule
        \textbf{Parameter} & \textbf{Value} \\
        \midrule
        \multicolumn{2}{@{}l}{\textit{Internal voting}} \\
        Number of windows & 10 \\
        Tokens per window & 100 \\
        Minimum-length stride & 50 tokens \\
        Window wins required & 8 of 10 \\
        Peer-voting threshold $\tau_p$ & $3/4$ \\
        Reward values & $\{0,1\}$ \\
        Advantage normalization & Group mean and std. \\
        \midrule
        \multicolumn{2}{@{}l}{\textit{Response generation}} \\
        Rollout engine & vLLM \\
        Rollout precision & bfloat16 \\
        Tensor parallel size & 1 \\
        GPU memory utilization & 0.6 \\
        Maximum prompt length & 1,024 tokens$^\dagger$ \\
        Maximum response length & 3,072 tokens \\
        Training temperature & 1.0 \\
        Training top-$p$ & 1.0 \\
        Training top-$k$ & Disabled \\
        Evaluation decoding & Sampling, eight responses per prompt \\
Accuracy aggregation & Mean accuracy over eight responses \\
        Evaluation steps & 10, 20, 30, 40, 80 \\
        \bottomrule
    \end{tabularx}
    \label{tab:tta_voting_generation}
\end{table}

\subsection{MATH Training}
\label{app:math_training}
This appendix documents all models training run used
in the method comparison.
We trained \textsc{SIF} on the MATH training split for
one epoch, with checkpoint resumption disabled.
Prompts exceeding 512 tokens were filtered, leaving
7,420 of the original 7,500 training examples.
The MATH test split served as the online validation set;
4,965 of its 5,000 examples remained after filtering.
With a prompt batch size of 64 and the final incomplete
batch dropped, the run completed 115 training steps.

Table~\ref{tab:math_optimization} summarizes the
optimization settings.
Each training step sampled eight responses for each of
64 prompts, producing 512 candidate responses before
the update mask was applied.

\begin{table}[t]
    \centering
    \small
    \setlength{\tabcolsep}{4pt}
    \caption{Optimization settings for all models
    MATH training run.
    The PPO mini-batch size is expressed in prompts;
    each prompt contributes eight sampled responses.}
    \renewcommand{\arraystretch}{1.08}
    \begin{tabularx}{\columnwidth}{
        @{}>{\raggedright\arraybackslash}X
        >{\raggedright\arraybackslash}p{0.43\columnwidth}@{}
    }
        \toprule
        \textbf{Parameter} & \textbf{Value} \\
        \midrule
        Initial model & - \\
        Training split & MATH train \\
        Online validation split & MATH test \\
        Training epochs & 1 \\
        Training steps & 115 \\
        Prompt batch size & 64 \\
        Responses per prompt & 8 \\
        PPO mini-batch size & 64 \\
        Micro-batch per GPU & 4 \\
        PPO epochs per batch & 1 \\
        Optimizer & AdamW \\
        Adam coefficients & $(0.9, 0.999)$ \\
        Peak learning rate & $3 \times 10^{-6}$ \\
        Learning-rate schedule & Cosine decay \\
        Warmup & 11 steps (ratio 0.1) \\
        Weight decay & 0.01 \\
        PPO clipping range & $[0.8, 1.2]$ \\
        Gradient clipping norm & 1.0 \\
        KL loss coefficient & 0.005 \\
        KL estimator & \texttt{low\_var\_kl} \\
        KL penalty in reward & Disabled \\
        Entropy coefficient & 0 \\
        Parameter updates & Full-model fine-tuning \\
        Distributed strategy & FSDP \\
        Gradient checkpointing & Enabled \\
        Hardware allocation & 1 node, 8 A100 40G GPUs \\
        Random seed & 123 \\
        \bottomrule
    \end{tabularx}
    \label{tab:math_optimization}
\end{table}

\noindent \textbf {Internal voting rewards.} For each response, valid tokens were sorted in descending
order of self-certainty.
Ten windows of 100 tokens were placed across the sorted
sequence, with starting positions distributed evenly
between the first and last possible windows.
Each window was represented by its mean self-certainty.

A response won a pairwise comparison when its window mean
was strictly higher in at least eight of the ten windows.
A response received a binary reward of one when its
win fraction against the other eligible responses
to the same prompt reached the peer-voting threshold
$\tau_p=3/4$, and zero otherwise.
Advantages were computed by normalizing these rewards
within each prompt group using the group mean and
standard deviation.

The minimum eligible response length was
$100 + 50 \times (10-1) = 550$ tokens.
At this length, adjacent windows had a stride of 50 tokens;
for longer responses, their spacing increased with length.
Responses shorter than 550 tokens and prompt groups with
identical rewards across eligible responses were excluded
from updates.
The update mask was applied to both the policy-gradient
and KL losses.

\begin{table}[t]
    \centering
    \small
    \setlength{\tabcolsep}{4pt}
     \caption{Internal voting and training generation
    settings for all models MATH run.}
    \renewcommand{\arraystretch}{1.08}
    \begin{tabularx}{\columnwidth}{
        @{}>{\raggedright\arraybackslash}X
        >{\raggedright\arraybackslash}p{0.43\columnwidth}@{}
    }
        \toprule
        \textbf{Parameter} & \textbf{Value} \\
        \midrule
        \multicolumn{2}{@{}l}{\textit{Internal voting}} \\
        Number of windows & 10 \\
        Tokens per window & 100 \\
        Minimum-length stride & 50 tokens \\
        Window wins required & 8 of 10 \\
        Peer-voting threshold $\tau_p$ & $3/4$ \\
        Reward values & $\{0,1\}$ \\
        \midrule
        \multicolumn{2}{@{}l}{\textit{Training generation}} \\
        Rollout engine & vLLM \\
        Rollout precision & bfloat16 \\
        Tensor parallel size & 1 \\
        GPU memory utilization & 0.6 \\
        Maximum prompt length & 512 tokens \\
        Maximum response length & 3,072 tokens \\
        Temperature & 1.0 \\
        Top-$p$ & 1.0 \\
        Top-$k$ & Disabled \\
        \bottomrule
    \end{tabularx}
    \label{tab:math_voting_generation}
\end{table}

\noindent \textbf {Online validation and checkpointing.}
Online validation used the MATH test split, with one
greedily decoded response per prompt and a temperature
of zero.
Validation and checkpoint checks were scheduled every
ten training steps.
The checkpoint policy retained the best and last
checkpoints.
Validation before training was disabled.

\noindent \textbf {Independent method-comparison evaluation.}
The downstream comparison evaluator sampled eight
responses per prompt with a temperature of 0.6,
top-$p$ of 1.0, and random seed 123.
It used maximum prompt and response lengths of
1,024 and 3,072 tokens, respectively.
Evaluation ran with vLLM on two GPUs, using a tensor
parallel size of two and a GPU memory utilization of 0.85.

\subsection{Ablation study configurations.}
Table~\ref{tab:ablation_benchmark_results} compares full SIF
with variants that remove peer-level or window-level voting.
Full SIF constructs binary rewards through both voting levels.
\textit{SIF w/o Peer} retains window-level pairwise comparisons
and length-based eligibility, but uses the fraction of eligible
peers defeated as a continuous reward without applying the
peer-voting threshold.
\textit{SIF w/o Window} removes both window-level comparisons
and the minimum response-length requirement.
All sampled responses participate in pairwise comparisons
based on their mean token-level self-certainty.
A response receives a binary reward of one when its mean
self-certainty exceeds that of at least three quarters of
the other sampled responses to the same prompt, and zero
otherwise.
Both variants retain group-level filtering and GRPO-style
advantage normalization over their respective comparison groups.
Intuitor uses mean token-level self-certainty over each
complete response as its continuous reward.
All configurations use Qwen3-1.7B and undergo 40 steps
of test-time training.

\begin{table}[t]
    \centering
    \caption{Diagnostics for \textsc{Intuitor}. Acc. is MATH validation accuracy; SC and entropy are the logged training-batch means. These snapshots illustrate proxy--accuracy divergence.}
    \small
    \setlength{\tabcolsep}{4pt}
    \begin{tabular}{lrrrr}
        \toprule
        Backbone & Step & Acc. (\%) & SC & Entropy \\
        \midrule
        Qwen3-1.7B & 30 & 73.43 & 55.03 & 0.0957 \\
                    & 80 & 0.00 & 61.54 & 0.0280 \\
        \midrule
        Llama-3.2-3B & 20 & 41.86 & 33.29 & 0.0721 \\
        (Instruct)   & 40 & 0.00 & 35.98 & 0.0161 \\
        \bottomrule
    \end{tabular}
    \label{tab:diagnostic}
\end{table}

\section{Ablations of Random Filtering and Reward Discretization}
\label{app:ablation}
We examine whether reducing the number of training responses or
discretizing mean self-certainty rewards is sufficient to prevent
performance collapse. We evaluate three fixed-rate random-filtering
variants of Intuitor and a top-2 reward baseline on Qwen3-1.7B.
Table~\ref{tab:ablation_validation} reports their validation trajectories
and peak-to-final accuracy drops.

\subsection{Experimental Protocol}
\label{app:ablation_protocol}
All four configurations were trained for one epoch on MATH, with one
run per configuration. Each rollout step sampled 64 prompts and
generated eight responses per prompt, yielding 512 responses before
filtering. Dropping the final incomplete batch resulted in 115 rollout
steps per run.

The sampling temperature was 1.0, with maximum prompt and response
lengths of 512 and 3,072 tokens, respectively. The actor used a peak
learning rate of $3\times10^{-6}$, a cosine schedule with 11 warmup
steps, one PPO epoch per batch, a clipping parameter of 0.2, and a
KL loss coefficient of 0.005. Policy optimization used GRPO
group-relative advantages. Reference answers were used only for
monitoring and evaluation, not for reward construction.

Validation used greedy decoding with one response per prompt on the
MATH test split. Accuracy was recorded at rollout steps
$10,20,\ldots,110$ and at the final step, 115. Peak accuracy denotes
the highest value among these checkpoints, and Drop is the
peak-to-final decrease in percentage points.

\subsection{Ablation Definitions}
\label{app:ablation_definitions}
For each prompt $x$, let $\{y_i\}_{i=1}^{N}$ denote the sampled
responses, where $N=8$. Let $\bar{c}_i$ be the mean token-level
self-certainty of $y_i$, as defined in
Equation~\ref{eq:sif-certainty}.

\begin{table*}[t]
    \centering
    \caption{
        Validation accuracy of Intuitor with random response filtering
        and a top-2 reward baseline on Qwen3-1.7B trained on MATH,
        with one run per configuration.
        Random filtering excludes the indicated percentage of sampled
        responses from each training batch.
        The top-2 baseline assigns binary rewards according to
        mean self-certainty rankings without response filtering.
        Peak denotes the highest accuracy among the reported checkpoints;
        Drop is Peak minus Final in percentage points,
        calculated from the displayed values.
    }
    \label{tab:ablation_validation}
    \small
    \setlength{\tabcolsep}{5pt}
    \renewcommand{\arraystretch}{1.12}
    \begin{tabularx}{\textwidth}{
        @{}>{\raggedright\arraybackslash}p{0.30\textwidth}
        *{4}{>{\centering\arraybackslash}X}@{}
    }
        \toprule
        Metric / rollout step
        & \makecell{Filter responses\\Random 40\%}
        & \makecell{Filter responses\\Random 20\%}
        & \makecell{Filter responses\\Random 10\%}
        & \makecell{Top-2\\no filter} \\
        \midrule
        \multicolumn{5}{@{}l}{\textit{Validation accuracy (\%)}} \\
        Global Step 10  & 62.20 & 63.04 & 59.68 & 60.18 \\
        Global Step 20  & 67.65 & 72.80 & 73.11 & 75.89 \\
        Global Step 30  & 72.74 & 69.33 & 63.11 & 68.98 \\
        Global Step 40  & 52.55 & 54.87 & 45.78 & 65.54 \\
        Global Step 50  & 43.46 & 41.66 & 30.07 & 65.78 \\
        Global Step 60  & 39.19 & 33.78 & 15.67 &  64.69\\
        Global Step 70  & 32.44 & 13.76 & 11.45 & 54.48 \\
        Global Step 80  & 29.47 & 15.79 & 7.14 & 21.83 \\
        Global Step 90  & 25.49 & 8.52 &  0 & 10.21 \\
        Global Step 100 & 19.30 & 6.21 & 0 & 7.23  \\
        Global Step 110 & 17.74 & 4.82 & 0 & 6.67  \\
        Global Step 115 (Final)
            & 15.14 & 3.12 & 0 & 6.14 \\
        \midrule
        \multicolumn{5}{@{}l}{\textit{Peak and retained performance}} \\
        Peak accuracy (\%)
            & 72.74 & 72.80 & 73.11 & 75.89 \\
        Peak step
            & 30 & 20 & 20 & 20 \\
        Drop (pp)
            & 57.60 & 69.68 & 73.11 & 69.75 \\
        \bottomrule
    \end{tabularx}
\end{table*}

\noindent \textbf{Intuitor with random response filtering.}
We retain Intuitor's continuous reward
$r_i^{\mathrm{SC}}=\bar{c}_i$ and randomly discard
$\rho\in\{10\%,20\%,40\%\}$ of the sampled responses from each
training batch. Only retained responses contribute to policy
optimization. Selection operates at the response level and does
not depend on response length, self-certainty, or correctness.
Each run uses a fixed filtering rate throughout training.

\noindent \textbf{Top-2 rewards without filtering.}
Within each prompt group, we rank responses by mean self-certainty
and assign binary rewards:
\begin{equation}
    r_i^{\mathrm{top2}}
    =
    \mathbf{1}\!\left[
        i\in\operatorname{Top2}
        \bigl(\{\bar{c}_j\}_{j=1}^{N}\bigr)
    \right].
    \label{eq:ablation_top2}
\end{equation}
Exactly two responses receive reward one and the remaining six
receive reward zero. All eight responses participate in
group-relative advantage computation and policy optimization,
including those with zero reward. This baseline tests reward
discretization while preserving the underlying mean
self-certainty ranking.

\subsection{Results and Training Diagnostics}
\label{app:ablation_results}
Table~\ref{tab:ablation_validation} shows substantial late-stage
degradation in all four configurations.

\noindent \textbf{Random filtering does not prevent collapse.}
The three filtering rates yield similar peak accuracies,
ranging from 72.74\% to 73.11\%, but substantially different
final accuracies. Discarding 10\%, 20\%, and 40\% of responses
results in final accuracies of 0.00\%, 3.12\%, and 15.14\%,
with corresponding drops of 73.11, 69.68, and 57.60 percentage
points. The 10\% variant reaches zero accuracy at step 90 and
remains there through step 115. Increasing the filtering rate
to 40\% shifts the peak from step 20 to step 30 and leaves
more performance at the endpoint, but accuracy still falls
from 72.74\% to 15.14\%. Thus, stronger random filtering
moderates the observed deterioration without sustaining
the early gains.

\noindent \textbf{Reward discretization does not preserve performance.}
Top-2 achieves the highest peak among these configurations,
reaching 75.89\% at step 20. However, accuracy declines to
64.69\% at step 60, 21.83\% at step 80, and 6.14\% at
the final checkpoint, a drop of 69.75 percentage points.
Binary rewards therefore remain susceptible to collapse
when positive feedback is determined solely by the
mean self-certainty ranking.

\begin{figure*}
\vspace{-4pt}
    \centering
\includegraphics[width=\linewidth]{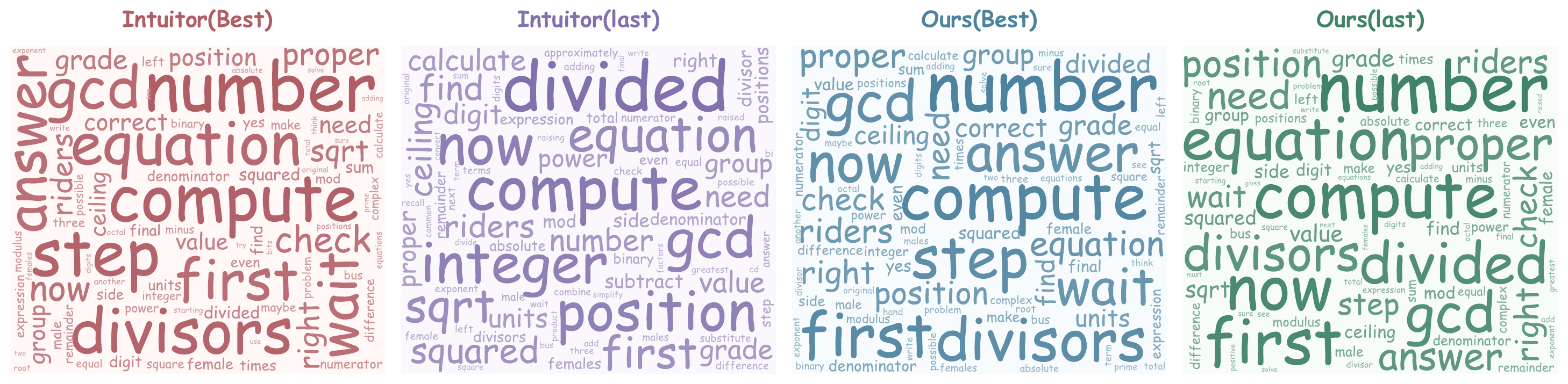}
\vspace{-24pt}
    \caption{Word clouds of generated responses from the best and last checkpoints of Intuitor and our method on the same ten MATH500 problems. Larger words indicate higher average within-response frequencies within each panel; common stopwords are omitted.}
    \vspace{-6pt}
\label{fig:wordcloud}
\end{figure*}

\subsection{Interpretation and Scope}
\label{app:ablation_scope}
The similar peaks but divergent endpoints distinguish initial
learning from sustained performance retention. Random filtering
reduces the number of responses used for optimization without
correcting the preference encoded by the reward. It provides
no criterion for selectively excluding responses whose
self-certainty is uninformative about correctness. Likewise,
top-2 discretization removes reward-magnitude differences but
continues to favor the same highest-ranked response means.
Neither intervention addresses the information lost when an
entire certainty profile is reduced to a single scalar.
These findings motivate SIF's comparisons across ranked
certainty windows and multiple peers, together with selective
policy updates.

These experiments establish the insufficiency of the tested
fixed-rate filtering and top-2 reward schemes in the observed
runs; they do not isolate the contribution of each SIF component.
Each configuration was evaluated once, and equal rollout
steps do not imply equal numbers of responses contributing
to optimization. The fixed-rate experiments also differ from
the control in Table~\ref{tab:gating_comparison}, which matches
SIF's filtering rate at each training step. Together, these
comparisons support the importance of how feedback and training
responses are selected, beyond simply reducing batch size or
binarizing mean self-certainty.

\section{Baseline Reproduction Configurations}
\label{app:baseline_configs}

\noindent \textbf {Shared training setup.}
We implement GRPO, Intuitor, Vigor, TTRL, and SCOPE within the \texttt{verl} framework~\cite{Sheng_2025}, using Llama-3.2-3B-Instruct, Qwen2.5-Math-1.5B, and Qwen3-1.7B as backbones. These runs use the MATH training split for policy optimization and the MATH test split for periodic validation, followed by evaluation on the reasoning benchmarks. Table~\ref{tab:baseline_shared_config} summarizes the common optimization settings; method-specific and run-specific differences are described below.

\begin{table}[t]
    \centering
    \small
    \caption{Common baseline training settings. The configured PPO mini-batch size is expressed in prompts and expands to 512 sampled responses before distribution across GPUs.}
    \label{tab:baseline_shared_config}
    \setlength{\tabcolsep}{4pt}
    \renewcommand{\arraystretch}{1.06}
    \begin{tabularx}{\columnwidth}{
        @{}>{\raggedright\arraybackslash}X
        >{\raggedright\arraybackslash}p{0.43\columnwidth}@{}
    }
        \toprule
        \textbf{Parameter} & \textbf{Value} \\
        \midrule
        Training budget & 1 epoch \\
        Prompt batch size & 64 \\
        Responses per prompt & 8 \\
        PPO mini-batch size & 64 \\
        Micro-batch per GPU & 4 \\
        PPO epochs per batch & 1 \\
        Optimizer & AdamW \\
        Adam coefficients & $(0.9, 0.999)$ \\
        Peak learning rate & $3\times10^{-6}$ \\
        Learning-rate schedule & Cosine decay \\
        Warmup ratio & 0.1 \\
        Weight decay & 0.01 \\
        PPO clipping range & $[0.8,1.2]$ \\
        Gradient clipping norm & 1.0 \\
        KL loss coefficient & 0.005 \\
        KL estimator & \texttt{low\_var\_kl} \\
        KL penalty in reward & Disabled \\
        Entropy coefficient & 0 \\
        Maximum response length & 3072 tokens \\
        Rollout temperature & 1.0 \\
        Rollout top-$p$ / top-$k$ & 1.0 / disabled \\
        Parameter updates & Full-model fine-tuning \\
        Gradient checkpointing & Enabled \\
        Distributed strategy & FSDP \\
        Generation backend & vLLM, bfloat16 \\
        Hardware allocation & 1 node, 8 A100 40G GPUs \\
        Validation/save interval & Every 10 steps \\
        \bottomrule
    \end{tabularx}
\end{table}

\noindent \textbf {GRPO.}
We use \texttt{adv\_estimator=grpo} and binary outcome rewards obtained by comparing extracted answers with MATH reference answers. Rewards are centered and divided by the sample standard deviation within each eight-response group, with a numerical stabilizer of $10^{-6}$. Policy optimization uses the clipped objective and KL regularization specified in Table~\ref{tab:baseline_shared_config}, without a learned critic or reward model.

\noindent \textbf {Intuitor.}
We use \texttt{adv\_estimator=Intuitor}. Each response receives its mean token-level self-certainty as the reward, followed by the same group normalization as GRPO. Reference-answer correctness is recorded for evaluation and diagnostics but does not determine the intrinsic reward. Intuitor uses all valid response tokens without SIF's minimum-length eligibility rule or voting-based reward construction.

\noindent \textbf {Vigor.}
We enable \texttt{use\_Vigor=True}, which selects the dedicated Vigor reward and advantage implementation. For each response, intrinsic score is
\begin{equation}
    r_i^{\mathrm{Vigor}}
    =-\sqrt{|o_i|}\,
      \bigl\|\nabla_\theta\ell_i^{\mathrm{NLL}}\bigr\|_2,
\end{equation}
where $\ell_i^{\mathrm{NLL}}$ is the mean negative log-likelihood over valid response tokens. The norm is computed over all trainable model parameters separately for each response, with FP32 norm accumulation. Scores are ranked within each prompt group and mapped linearly to $[-1,1]$. The resulting rewards are normalized using the sample standard deviation and a stabilizer of $10^{-4}$. Scoring gradients are discarded before policy optimization. Neither self-certainty nor reference-answer correctness enters this reward.

\noindent \textbf {TTRL.}
We enable \texttt{ttrl.enable=True} and set both \texttt{n\_votes\_per\_prompt} and \texttt{n\_samples\_per\_prompt} to eight. Extracted and normalized final answers from the eight sampled responses determine a majority pseudo-label. All eight responses are retained for training and scored against that pseudo-label using the migrated TTRL answer grader. The resulting outcome rewards use GRPO advantages with the $10^{-6}$ stabilizer.

\noindent \textbf {SCOPE.}
We enable \texttt{scope.enable=True} and \texttt{scope\_confidence=True}, with eight generated responses, eight voting candidates, four voters per pseudo-label subgroup, and $\lambda=0.7$. Step-wise confidence is computed from rollout log-probabilities using the migrated SCOPE procedure, with steps separated by the tokenizer's representation of a blank line. Candidate subgroup sizes are $\{1,2,4,8\}$. For each subgroup, four voters are sampled without replacement from the full eight-response pool. Confidence-weighted votes produce subgroup pseudo-labels, and the subgroup size is selected using the reward--diversity Pareto criterion. Voting is repeated after selection to construct the final labels. GRPO advantage normalization remains over full eight-response prompt group, rather than separately within each subgroup. Original reference answers are used for validation and diagnostics, while optimization uses pseudo-label rewards.

\section{Case Study}
To illustrate more intuitively the performance collapse observed during Intuitor's training, we selected the best checkpoint saved during training (before collapse) and the final checkpoint (after collapse) to generate responses to the same problem. For this comparison, we increased the maximum generation length to 8,192 tokens. A specific example is shown in Figure~\ref{fig:appendix-checkpoint-comparison}.

\begin{figure*}[t]
\begingroup
\setlength{\fboxsep}{5pt}
\setlength{\parindent}{0pt}
\setlength{\parskip}{0pt}
\setlength{\smallskipamount}{2pt}
\setlength{\medskipamount}{4pt}

% 本图局部设置：等宽字体、紧凑间距
\providecommand{\responsebox}[2]{}
\renewcommand{\responsebox}[2]{%
  \begin{minipage}[t]{0.485\linewidth}
    \vspace{0pt}
    \fcolorbox{black!35}{black!2}{%
      \begin{minipage}[t]
        {\dimexpr\linewidth-2\fboxsep-2\fboxrule\relax}
        \fontsize{8}{9}\selectfont\ttfamily\raggedright
        \setlength{\abovedisplayskip}{3pt}
        \setlength{\belowdisplayskip}{3pt}
        \setlength{\abovedisplayshortskip}{2pt}
        \setlength{\belowdisplayshortskip}{2pt}
        \textbf{#1}\par\smallskip
        #2
      \end{minipage}%
    }
  \end{minipage}%
}

\centering
\begin{minipage}{\textwidth}
\footnotesize\raggedright

\textbf{Problem.}
A proper divisor of an integer \(N\) is a positive divisor
of \(N\) that is less than \(N\).
What is the sum of the proper divisors of the sum of the
proper divisors of 284?

\par\smallskip
\textbf{Reference answer:} \(284\).
\par\medskip

% ==================================================
% 上排：我们的方法
% ==================================================
\textbf{Ours}\par\smallskip

% 左上：best，sample 02，seed 1235
\responsebox{Best (Step 40) --- Correct}{%
  \textbf{Full response:
    \textcolor{blue}{1,870} tokens.}\par
  \textbf{Mean self-certainty:
    \textcolor{blue}{92.07}.}\par\smallskip

  \textit{[...]}\par\smallskip

  So the proper divisors are 1, 2, 4, 71, \textbf{142}.
  \par\smallskip

  \textit{[...]}\par\smallskip

  Calculating that: 1 + 2 is 3, plus 4 is 7,
  plus 71 is 78, plus 142 is 220.
  So the sum of the proper divisors of 284 is 220.
  \par\smallskip

  \textit{[...]}\par\smallskip

  But proper divisors are those less than 220.
  So exclude 220 itself.
  So the proper divisors are
  1, 2, 4, 5, 10, 20, 11, 22, 44, 55, 110.
  \par\smallskip

  \textit{[...]}\par\smallskip

  So the sum of the proper divisors of 220 is 284.
  \par\smallskip

  Therefore, the answer is 284.
  \par\smallskip

  \textit{[...]}\par\smallskip

  \(\boxed{284}\)
}%
\hfill
% 右上：last，sample 06，seed 1239
\responsebox{Last (Step 115) --- Correct}{%
  \textbf{Full response:
    \textcolor{blue}{2,093} tokens.}\par
  \textbf{Mean self-certainty:
    \textcolor{blue}{94.81}.}\par\smallskip

  \textit{[...]}\par\smallskip

  So the divisors are 1, 2, 4, 71, 142, 284.
  Now, proper divisors are those less than 284.
  So excluding 284 itself, the proper divisors are
  1, 2, 4, 71, \textbf{142}.
  \par\smallskip

  \textit{[...]}\par\smallskip

  1 + 2 is 3. 3 + 4 is 7.
  7 + 71 is 78. 78 + 142 is 220.
  So the sum of the proper divisors of 284 is 220.
  \par\smallskip

  \textit{[...]}\par\smallskip

  Therefore, the proper divisors are
  1, 2, 4, 5, 10, 20, 11, 22, 44, 55, 110.
  \par\smallskip

  \textit{[...]}\par\smallskip

  So the sum of the proper divisors of 220 is 284.
  \par\smallskip

  Therefore, the answer is 284.
  \par\smallskip

  \textit{[...]}\par\smallskip

  \(\boxed{284}\)
}

\par\medskip

% ==================================================
% 下排：原版 Intuitor
% ==================================================
\textbf{Intuitor}\par\smallskip

% 左下：原 step20，sample 05，seed 1238
\responsebox{Best (Step 20) --- Correct}{%
  \textbf{Full response:
    \textcolor{blue}{2,691} tokens.}\par
  \textbf{Mean self-certainty:
    \textcolor{blue}{93.74}.}\par\smallskip

  \textit{[...]}\par\smallskip

  So the divisors of 284 are 1, 2, 4, 71, 142, 284.
  \par\smallskip

  But since we need proper divisors, which are less than 284,
  we exclude 284. So the proper divisors are
  1, 2, 4, 71, \textbf{142}.
  \par\smallskip

  \textit{[...]}\par\smallskip

  1 + 2 is 3, plus 4 is 7, plus 71 is 78,
  plus 142 is 220.
  \par\smallskip

  So the sum of the proper divisors of 284 is 220.
  \par\smallskip

  \textit{[...]}\par\smallskip

  Now, proper divisors are those less than 220,
  so we exclude 220. Therefore, the proper divisors are
  1, 2, 4, 5, 10, 11, 20, 22, 44, 55, 110.
  \par\smallskip

  \textit{[...]}\par\smallskip

  So the sum of the proper divisors of 220 is 284.
  \par\smallskip

  Therefore, the answer is 284.
  \par\smallskip

  \textit{[...]}\par\smallskip

  The sum of the proper divisors of the sum of the
  proper divisors of 284 is \(\boxed{284}\).
}%
\hfill
% 右下：原 step115，sample 05，seed 1238
\responsebox{Last (Step 115) --- Incorrect}{%
  \textbf{Full response:
    \textcolor{red}{3,641} tokens.}\par
  \textbf{Mean self-certainty:
    \textcolor{red}{101.60}.}\par\smallskip

  \textit{[...]}\par\smallskip

  284 divided by 1 is 284, so 1 is a divisor.
  Then 284 divided by 2 is 142, so 2 is a divisor.
  \par\smallskip

  \textit{[...]}\par\smallskip

  From the above, we saw that 1, 2, 4, and then 71?
  \textbf{Wait,} \(284\div4\) is 71. So 4 and 71.
  \textbf{Then check if 71 is a divisor.}
  \(71\times4\) is 284. So 71 is a divisor.
  \textbf{Then check if there are any others.}
  Let's check 71:
  \(71\times1=71\), \(71\times2=142\),
  \(71\times3=213\), \(71\times4=284\).

  So the divisors are 1, 2, 4, 71, and 284.
  But 284 is not a proper divisor,
  so proper divisors are 1, 2, 4, 71.
  \textbf{Let me confirm:}
  \[
    \begin{aligned}
      1\times284 &=284\\
      2\times142 &=284\\
      4\times71 &=284\\
      71\times4 &=284
    \end{aligned}
  \]

  So the divisors are 1, 2, 4, 71, and 284.
  Therefore, proper divisors are 1, 2, 4, 71.
  \par\smallskip

  \textit{[...]}\par\smallskip

  So the sum of the proper divisors of 284 is 78.
  \par\smallskip

  \textit{[...]}\par\smallskip

  So the proper divisors of 78 are
  1, 2, 3, 6, 13, 26, 39.
  \textbf{Let me check:}
  \(1\times78=78\), \(2\times39=78\),
  \(3\times26=78\), \(6\times13=78\),
  \(13\times6=78\), \(26\times3=78\),
  \(39\times2=78\).
  So the proper divisors are 1, 2, 3, 6, 13, 26, 39.
  \par\smallskip

  \textit{[...]}\par\smallskip

  Final answer: \(\boxed{90}\)
}

\end{minipage}
\par
\endgroup

\caption{%
Qualitative comparison on the same MATH500 problem.
Top: Ours, best checkpoint (step 40)
and last checkpoint (step 115).
Bottom: Intuitor, checkpoints at steps 20 and 115.
Both responses from our method are correct, whereas
Intuitor's last response produces verbose and repetitive
reasoning and gives an incorrect answer.
Authentic excerpts are shown; [...] marks omissions.
Bold emphasis, colors, and correctness labels are added.
Token counts and mean self-certainty refer to the full responses,
with temperature \(T=0.6\).
}
\label{fig:appendix-checkpoint-comparison}
\end{figure*}

\end{document}